\documentclass{article}

\usepackage{arxiv}
\usepackage[numbers,compress]{natbib}
\usepackage[utf8]{inputenc} 
\usepackage[T1]{fontenc}    
\usepackage{hyperref}       
\usepackage{url}            
\usepackage{booktabs}       
\usepackage{amsfonts}       
\usepackage{nicefrac}       
\usepackage{microtype}      
\usepackage{xcolor}         

\usepackage{microtype}
\usepackage{bbm}
\usepackage{pdfpages}
\usepackage{fancyhdr}
\usepackage{graphicx}
\usepackage{caption}
\usepackage{subcaption}
\usepackage{multirow}
\usepackage{booktabs} 
\usepackage[utf8]{inputenc} 
\usepackage[T1]{fontenc}    
\usepackage{url}            
\usepackage{nicefrac}       
\usepackage{stmaryrd}
\usepackage[parfill]{parskip}
\usepackage{mathtools}
\usepackage{cases}
\usepackage{comment}
\usepackage{paralist}
\usepackage{breqn}
\usepackage{color}
\usepackage{algorithm, algorithmic}
\usepackage{appendix}
\usepackage{xspace}
\usepackage{enumitem}
\usepackage{gensymb}
\usepackage[english]{babel}
\usepackage{xcolor}
\usepackage{todonotes}
\usepackage{wrapfig}
\usepackage{letltxmacro}
\usepackage{amsmath}
\usepackage{breqn}
\usepackage{algorithm}
\LetLtxMacro\oldttfamily\ttfamily
\DeclareRobustCommand{\ttfamily}{\oldttfamily\csname ttsize\endcsname}
\newcommand{\setttsize}[1]{\def\ttsize{#1}}%
\setttsize{\footnotesize}%

\newenvironment{hproof}{%
  \proof}{\endproof}


\newcommand{\thref}[1]{{Theorem}~\ref{#1}}



\def\to{{\,\rightarrow\,}}

\mathchardef\mhyphen="2D




\newcommand{\vertiii}[1]{{\left\vert\kern-0.25ex\left\vert\kern-0.25ex\left\vert #1
    \right\vert\kern-0.25ex\right\vert\kern-0.25ex\right\vert}}









\def\bp{{\mathbf{p}}}

\def\bw{{\mathbf{w}}}
\def\bx{{\mathbf{x}}}


\def\bI{{\mathbf{I}}}





\def\cA{\mathcal{A}}

\def\cN{\mathcal{N}}

\def\cX{\mathcal{X}}
\def\cY{\mathcal{Y}}






\newcommand{\mchanges}{model shifts\xspace}
\newcommand{\roar}{RObust Algorithmic Recourse\xspace}
\newcommand{\ROAR}{ROAR\xspace}

\newcommand{\llog}{\ell_{\text{log}}}
\usepackage{amssymb,amsmath,amsthm}
\newtheorem{theorem}{Theorem}
\newtheorem{lemma}{Lemma}

\DeclareMathOperator*{\argmax}{arg\,max}
\DeclareMathOperator*{\argmin}{arg\,min}
\DeclarePairedDelimiterX{\infdivx}[2]{(}{)}{%
  #1\;\delimsize\|\;#2%
}



\newcommand{\ie}{i.e.~}

\newcommand{\hideh}[1]{}
\newcommand{\blackbox}{\mathcal{M}\xspace}
\newcommand{\rcost}{c\xspace}
\usepackage{hyperref}

\usepackage{amsfonts,amsmath,amssymb,amsthm} 

\makeatletter
\newtheorem*{rep@theorem}{\rep@title}
\newcommand{\newreptheorem}[2]{%
\newenvironment{rep#1}[1]{%
 \def\rep@title{#2 \ref{##1}}%
 \begin{rep@theorem}}%
 {\end{rep@theorem}}}
\makeatother

\newreptheorem{theorem}{Theorem}
\newreptheorem{lemma}{Lemma}
\newtheorem{remark}{Remark}

\title{Towards Robust and Reliable Algorithmic Recourse}

%

\author{%
  Sohini Upadhyay\thanks{Equal contribution} \\
  Harvard University \\
  \texttt{supadhyay@g.harvard.edu} \\
  \And 
  Shalmali Joshi\footnotemark[1] \\
  Harvard University \\
  \texttt{shalmali@seas.harvard.edu} \\
  \And 
  Himabindu Lakkaraju \\
  Harvard University \\
  \texttt{hlakkaraju@hbs.harvard.edu} \\
}

\begin{document}

\maketitle

\begin{abstract}
As predictive models are increasingly being deployed in high-stakes decision making (e.g., loan approvals), there has been growing interest in post-hoc techniques which provide recourse to affected individuals. These techniques generate recourses under the assumption that the underlying predictive model does not change. However, in practice, models are often regularly updated for a variety of reasons (e.g., dataset shifts), thereby rendering previously prescribed recourses ineffective. To address this problem, we propose a novel framework, \roar (\ROAR), that leverages adversarial training for finding recourses that are robust to \mchanges. To the best of our knowledge, this work proposes the first ever solution to this critical problem. We also carry out detailed theoretical analysis which underscores the importance of constructing recourses that are robust to \mchanges: 1) we derive a lower bound on the probability of invalidation of recourses generated by existing approaches which are not robust to \mchanges. 
2) we prove that the additional cost incurred due to the robust recourses output by our framework is bounded. Experimental evaluation on multiple synthetic and real-world datasets demonstrates the efficacy of the proposed framework and supports our theoretical findings.
\end{abstract}

\section{Introduction}
\label{sec:intro}
Over the past decade, machine learning (ML) models are increasingly being deployed to make a variety of highly consequential decisions ranging from bail and hiring decisions to loan approvals. Consequently, there is growing emphasis on designing tools and techniques which can provide \emph{recourse} to individuals who have been adversely impacted by predicted outcomes~\cite{voigt2017eu}. For example, when an individual is denied a loan by a predictive model deployed by a bank, they should be provided with reasons for this decision, and also informed about what can be done to reverse it. When providing a recourse to an affected individual, it is absolutely critical to ensure that the corresponding decision making entity (e.g., bank) is able to honor that recourse and approve any re-application that fully implements the recommendations outlined in the prescribed recourse~\citet{wachter2017counterfactual}. 

Several approaches in recent literature tackled the problem of providing recourses by generating \emph{local} (instance level) counterfactual explanations \footnote{Note that counterfactual explanations~\citep{wachter2017counterfactual}, contrastive explanations~\citep{karimi2020survey}, and recourse~\citep{Ustun_2019} are used  interchangeably in prior literature. Counterfactual/contrastive explanations serve as a means to provide recourse to individuals with unfavorable algorithmic decisions. We use these terms interchangeably to refer to the notion introduced and defined by \citet{wachter2017counterfactual}} ~\cite{wachter2017counterfactual,Ustun_2019,MACE,FACE,looveren2019interpretable}. For  instance,~\citet{wachter2017counterfactual} proposed a gradient based approach which finds the closest modification (counterfactual) that can result in the desired prediction. \citet{Ustun_2019} proposed an efficient integer programming based approach to obtain \emph{actionable} recourses in the context of linear classifiers. There has also been some recent research that sheds light on the spuriousness of the recourses generated by counterfactual/contrastive explanation techniques~\cite{wachter2017counterfactual,Ustun_2019} and advocates for causal approaches~\cite{Barocas_2020, karimi2020algorithmic, karimi2020causal}. 

All the aforementioned approaches generate recourses under the assumption that the underlying predictive models do not change. This assumption, however, may not hold in practice. Real world settings are typically rife with different kinds of distribution shifts (e.g, temporal shifts)~\cite{rabanser2019failing}. In order to ensure that the deployed models are accurate despite such shifts, these models are periodically retrained and updated. Such model updates, however, pose severe challenges to the validity of recourses because previously prescribed recourses (generated by existing algorithms) may no longer be valid once the model is updated. Recent work by~\citet{rawal2020can} has, in fact, demonstrated empirically that recourses generated by state-of-the-algorithms are readily invalidated in the face of \mchanges resulting from different kinds of dataset shifts (e.g., temporal, geospatial, and data correction shifts). 
Their work underscores the importance of generating recourses that are robust to changes in models i.e., \mchanges, particularly those resulting from dataset shifts. However, none of the existing approaches address this problem. 

In this work, we propose a novel algorithmic framework, \roar (\ROAR) for generating instance level recourses (counterfactual explanations) that are robust to changes in the underlying predictive model. To the best of our knowledge, this work makes the first attempt at generating recourses that are robust to \mchanges. To this end, we propose a novel minimax objective that can be used to construct robust actionable recourses while minimizing the recourse costs. Second, we propose a set of model shifts that captures our intuition about the kinds of changes in the models to which recourses should be robust. Next, we outline an algorithm inspired by adversarial training to optimize the proposed objective. We also carry out detailed theoretical analysis to establish the following results: i) a lower bound on the probability of invalidation of recourses generated by existing approaches that are not robust to \mchanges, and ii) an upper bound on the relative increase in the costs incurred due to robust recourses (proposed by our framework) to the costs incurred by recourses generated from existing algorithms. Our theoretical results further establish the need for approaches like ours that generate actionable recourses that are robust to \mchanges. 

We evaluated our approach \ROAR on real world data from financial lending and education domains, focusing on \mchanges induced by the 
following kinds of distribution shifts -- data correction shift, temporal shift, and geospatial shift. We also experimented with synthetic data to analyze how the degree of data distribution shifts and consequent \mchanges affect the robustness and validity of the recourses output by our framework as well as the baselines. Our results demonstrate that the recourses constructed using our framework, \ROAR, are substantially more robust (67 -- 100\%) 
to changes in the underlying predictive models compared to those generated using state-of-the-art recourse finding technqiues. 
We also find that our framework achieves such a high degree of robustness without sacrificing the validity of the recourses w.r.t. the original predictive model or substantially increasing the costs associated with realizing the recourses. 


\section{Related Work}\label{sec:relatedwork}
Our work lies at the intersection of 
algorithmic recourse 
and adversarial robustness. Below, we discuss related work pertaining to each of these topics.

\paragraph{Algorithmic recourse} 
As discussed in Section~\ref{sec:intro}, several approaches have been proposed to construct algorithmic recourse for predictive models ~\cite{wachter2017counterfactual,Ustun_2019,MACE,FACE,looveren2019interpretable,Barocas_2020, karimi2020algorithmic, karimi2020causal,dhurandhar2019model}. 
 These approaches can be broadly characterized along the following dimensions~\cite{verma2020counterfactual}: the level of access they require to the underlying predictive model (black box vs. gradients), if and how they enforce sparsity (only a small number of features should be changed) in counterfactuals, if counterfactuals are required to lie on the data manifold or not, if underlying causal relationships should be accounted for when generating counterfactuals or not, whether the output should be multiple diverse counterfactuals or just a single counterfactual. While the aforementioned approaches have focused on generating instance level counterfactuals, there has also been some recent work on generating global summaries of model recourses which can be leveraged to audit ML methods~\cite{rawal2020beyond}.
More recently,~\citet{rawal2020can} demonstrated that recourses generated by state-of-the-art algorithms are readily invalidated due to \mchanges resulting from different kinds of dataset shifts. They argued that model updation is very common place in the real world, and it is important to ensure that recourses provided to affected individuals are robust to such updates. Similar arguments have been echoed in several other recent works~\cite{venkatasubramanian2020philosophical,karimi2020survey,pawelczyk2020counterfactual}. 
While there has been some recent work that explores the construction of other kinds of explanations (feature attribution and rule based explanations) that are robust to dataset shifts~\cite{lakkaraju2020robust}, our work makes the first attempt at tackling the problem of constructing recourses that are robust to \mchanges.

\paragraph{Adversarial Robustness} 
The techniques that we leverage in this work are inspired by the adversarial robustness literature. It is now well established that ML models are vulnerable to adversarial attacks~\citep{goodfellow2014explaining,chakraborty2018adversarial,athalye2018synthesizing}. The adversarial training procedure was recently proposed as a defense against such attacks~\citep{madry2018towards,athalye2018obfuscated,wong2018provable}. This procedure optimizes a minimax objective that captures the worst-case loss over a given set of perturbations to the input data. At a high level, it is based on gradient descent; at each gradient step, it solves an optimization problem to find the worst-case perturbation, and then computes the gradient at this perturbation. In contrast, our training procedure optimizes a minimax objective that captures the worst-case over a given set of model perturbations (thereby simulating model shift) and generates recourses that are valid under the corresponding \mchanges. This training procedure is novel and possibly of independent interest.

\section{Our Framework: \roar}\label{sec:method}
In this section, we detail our framework, \roar (\ROAR). First, we introduce some notation and discuss preliminary details about the algorithmic recourse problem setting. We then introduce our objective function, and discuss how to operationalize and optimize it efficiently.  

\subsection{Preliminaries}\label{sec:prelim}
Let us assume we are given a predictive model $\blackbox:\mathcal{X}\to\mathcal{Y}$, where $\mathcal{X}\subseteq\mathbb{R}^d$ is the feature space, and $\mathcal{Y}$ is the space of outcomes. Let $\mathcal{Y} = \{0,1\}$ where $0$ and $1$ denote an unfavorable outcome (e.g., loan denied) and a favorable outcome (e.g., loan approved) respectively.  Let $x \in \mathcal{X}$ be an instance which received a negative outcome i.e., $\blackbox(x) = 0$. The goal here is to find a recourse for this instance $x$ i.e., to determine a set of changes $\epsilon$ that can be made to $x$ in order to reverse the negative outcome. The problem of finding a recourse for $x$ involves finding a counterfactual $x' = x + \epsilon$ for which the black box outputs a positive outcome i.e., $\blackbox(x') = \blackbox(x+\epsilon) = 1 $.

There are, however, a few important considerations when finding the counterfactual $x' = x + \epsilon$. First, it is desirable to minimize the cost (or effort) required to change $x$ to $x'$. To formalize this, let us consider a cost function $\rcost: \mathcal{X} \times \mathcal{X} \to \mathbb{R}_{+}$. 
$\rcost(x, x')$ denotes the cost (or effort) incurred in changing an instance $x$ to $x'$. In practice, some of the commonly used cost functions are $\ell_1$ or $\ell_2$ distance~\cite{wachter2017counterfactual}, log-percentile shift~\cite{Ustun_2019}, and costs learned from pairwise feature comparisons input by end users~\cite{rawal2020beyond}. Furthermore, since recommendations to change features such as gender or race would be unactionable, it is important to restrict the search for counterfactuals in such a way that only actionable changes are allowed. Let $\mathcal{A}$ denote the set of plausible or actionable counterfactuals. 
Putting it all together, the problem of finding a recourse for instance $x$ for which $\blackbox(x) = 0$ can be formalized as: 
\vspace{-0.1in}
\begin{align}
\begin{split}
    x' &= \argmin_{\substack{x' \in \cA}}  c(x, x') 
    \quad \text{s.t}\quad \blackbox(x') = 1
\end{split}
\label{eqn:generalrecourse}
\end{align}
Eqn.~\ref{eqn:generalrecourse} captures the generic formulation leveraged by several of the state-of-the-art recourse finding algorithms. Typically, most approaches optimize the unconstrained and differentiable relaxation of Eqn.~\ref{eqn:generalrecourse} which is given below: 
\begin{align}
\begin{split}
    x' &= \argmin_{\substack{x' \in \cA}} \ell(\blackbox(x'),1) + \lambda \text{ } c(x, x')
\end{split}
\label{eqn:generalrecourse_un}
\end{align}
where $\ell:\cY \times \cY \to \mathbb{R}_{+}$ denotes a differentiable loss function (e.g., mean squared error loss or log-loss) which ensures that gap between $\blackbox(x')$ and favorable outcome $1$ is minimized, and $\lambda>0$ is a trade-off parameter. 

\subsection{Formulating Our Objective}\label{sec:Obj}
As can be seen from Eqn.~\ref{eqn:generalrecourse_un}, state-of-the-art recourse finding algorithms rely heavily on the assumption that the underlying predictive model $\blackbox$ does not change. However, predictive models deployed in the real world often get updated. This implies that individual who have acted upon a previously prescribed recourse are no longer guaranteed a favorable outcome once the model is updated. To address this critical challenge, we propose a novel minimax objective function which generates counterfactuals that minimize the worst-case loss over plausible model shifts. Let $\Delta$ denote the set of plausible model shifts and let $\blackbox_\delta$ denote a shifted model where $\delta \in \Delta$. Our objective function for generating robust recourse $x''$ for a given instance $x$ can be written as: 
 \begin{align}\label{eq:eq1}
     \begin{split}
         x'' &= \argmin_{\substack{x'' \in \cA }} \max_{\delta \in \Delta} \ell(\blackbox_{\delta}(x''),1) + \lambda c(x, x'') 
     \end{split}
 \end{align}
 where cost function $c$ and loss function $l$ are as defined in Section 3.1.
\paragraph{Choice of $\Delta$} Predictive models deployed in the real world are often updated regularly to handle data distribution shifts~\cite{rabanser2019failing}. Since these models are updated regularly, it is likely that they undergo small (and not drastic) shifts each time they are updated. To capture this intuition, we consider the following two choices for the set of plausible model shifts $\Delta$:  

$\Delta = \{\delta \in \mathbb{R}^n \mid \delta_{min} \leq \delta_i  \leq \delta_{max} \forall i \in \{1 \cdots n\} \}$. 

$\Delta = \{\delta \in \mathbb{R}^n \mid \|\delta \|_p \leq \delta_{max}\}$

where $p\geq 1$. Note that perturbations $\delta \in \Delta$ can be considered as operations either on the parameter space or on the gradient space of $\blackbox$. 
While the first choice of $\Delta$ presented above allows us to restrict model shifts within a small range, the second choice allows us to restrict model shifts within a norm-ball. 
These kinds of shifts can effectively capture small changes to both parameters (e.g., weights of linear models) as well as gradients.
Next, we describe how to optimize the objective in Eqn.~\ref{eq:eq1} and construct robust recourses.  

\subsection{Optimizing Our Objective}


While our objective function, the choice of $\Delta$, and the perturbations $\delta \in \Delta$ we introduce in Section~\ref{sec:Obj} are generic enough to handle 
shifts to both parameter space as well as the gradient space of any class of predictive models $\blackbox$, we solve our objective for a linear approximation $f$ of $\blackbox$. The procedure that we outline here remains generalizable even for non-linear models because local behavior of a given non-linear model can be approximated well by fitting a local linear model~\cite{ribeiro2016should}. 
Note that such approximations have already been explored by existing approaches on algorithmic recourse~\cite{Ustun_2019,rawal2020beyond}.
Let the linear approximation, which we denote by $f$ be parameterized by $w \in \mathcal{W}$. We make this parametrization explicit by using a subscript notation: $f_{w}$. We consider model shifts represented by perturbations to the model parameters $w \in \mathcal{W}$. In the case of linear models, these can be operationalized as additive perturbations $\delta \in \Delta$ to $w$. We will represent the resulting shifted classifier by $f_{w+\delta}$. 
Our objective function (Eqn.~\ref{eq:eq1}) can now be written in terms of this linear approximation $f$ as: 
 \begin{align}\label{eq:eq3}
     \begin{split}
         x'' &= \argmin_{\substack{x'' \in \cA }} \max_{\delta \in \Delta} \ell(f_{w+\delta}(x''),1) + \lambda c(x, x'') 
     \end{split}
 \end{align}
\hideh{
 \begin{align}\label{eq:eq3}
     \begin{split}
         x'' &= {\arg\min}_{x'' \in \cA_x } \max_{\delta \in \Delta} \lambda c(x, x'') + \ell(f_{w+\delta}(x''),1)
     \end{split}
 \end{align}}
Notice that the objective function defined in Equation~\ref{eq:eq3} is  similar to that of adversarial training~\citep{madry2018towards}. 
However, in our framework, the perturbations are applied to model parameters as opposed to data samples. These parallels help motivate the optimization procedure for constructing recourses that are robust to \mchanges. We outline the optimization procedure that we leverage to optimize our minimax objective (Eqn.~\ref{eq:eq3}) in 
Algorithm~\ref{alg:alg0}.

Algorithm~\ref{alg:alg0} proceeds in an iterative manner where we first find a perturbation $\hat{\delta} \in \Delta$ that maximizes the chance of invalidating the current estimate of the recourse $x''$, and then we take appropriate gradient steps on $x''$ to generate a valid recourse. This procedure is executed iteratively until the objective function value (Eqn.~\ref{eq:eq3}) converges. 




 \begin{algorithm}[!htbp]
\caption{Our Optimization Procedure}
\label{alg:alg0}
 \begin{algorithmic}
 \footnotesize
   \STATE {\bfseries Input:} $x$ \text{s.t.} $f_w(x)=0, f_w, \lambda >0, \Delta$, \text{learning rate} $\alpha>0$.
   \STATE \textbf{Initialize} $x'' = x, g = 0$
   \REPEAT
   \STATE $\hat{\delta} = \argmax_{\substack{\delta \in \Delta}} \ell(f_{w+\delta}(x''), 1)$
   \STATE $g = \nabla\Big[ \ell(f_{w+\hat{\delta}}(x''),1) + \lambda c(x'',x)\Big]$
  \STATE $x''\mathrel{-}= \alpha g$
   \UNTIL{convergence}
   \STATE Return $x''$
 \end{algorithmic}
 \end{algorithm}

\section{Theoretical Analysis}\label{sec:theory}
Here we carry out a detailed theoretical analysis to shed light on the benefits of our framework \ROAR. More specifically: 
1) we show that recourses generated by existing approaches are likely to be invalidated in the face of \mchanges. 
2) we prove that the additional cost incurred due to the robust recourses output by our framework is bounded. 

We first characterize how  recourses that do not account for model shifts (i.e., recourses output by state-of-the-art algorithms) fare when true model shifts can be characterized as additive shifts to model parameters. Specifically, we provide a lower bound on the likelihood that recourses generated without accounting for model shifts will be invalidated (even if they lie on the original data manifold). 
\begin{theorem}\label{thm1}
For a given instance $x \sim \cN(\mu, \Sigma)$, let $x'$ be the recourse that lies on the original data manifold (i.e. $x' \sim \cN(\mu, \Sigma)$) and is obtained without accounting for \mchanges. Let $\Sigma = UDU^T$. Then, for some \emph{true} model shift $\delta$, such that, $\frac{w^T\mu}{(w+\delta)^T\mu} \geq  \frac{\|\sqrt{D}Uw\|}{\|\sqrt{D}U(w+\delta)\|}$, and $\beta \geq 1$, the probability that $x'$ is invalidated on $f_{w+\delta}$ is at least:
$\frac{1}{2} \sqrt{\frac{2e}{\pi}} \frac{\sqrt{\beta-1}}{\beta} \exp^{-\beta \frac{(w^T\mu)^2}{2 \|\sqrt{D}Uw\|^2}}$
\end{theorem}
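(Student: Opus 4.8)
The plan is to translate "recourse $x'$ is invalidated on $f_{w+\delta}$" into an explicit event about the Gaussian random vector $x'$. Since $x'$ achieves recourse on the original model, we have $f_w(x') \geq 0$ (or $> 0$), i.e., $w^T x' \geq 0$ up to the bias term; invalidation on the shifted model means $(w+\delta)^T x' < 0$. So the probability of invalidation is $\Pr[(w+\delta)^T x' < 0 \mid w^T x' \geq 0]$, or perhaps just $\Pr[(w+\delta)^T x' < 0]$ intersected with the recourse region. First I would reduce to one-dimensional Gaussians: for $x' \sim \cN(\mu,\Sigma)$, the scalar $(w+\delta)^T x'$ is Gaussian with mean $(w+\delta)^T\mu$ and variance $(w+\delta)^T\Sigma(w+\delta) = \|\sqrt{D}U(w+\delta)\|^2$ using $\Sigma = UDU^T$ (and similarly for $w$). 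The hypothesis $\frac{w^T\mu}{(w+\delta)^T\mu} \geq \frac{\|\sqrt{D}Uw\|}{\|\sqrt{D}U(w+\delta)\|}$ is exactly the statement that the \emph{signal-to-noise ratio} (mean over standard deviation) of $w^T x'$ is at least that of $(w+\delta)^T x'$; call these ratios $t_w \geq t_{w+\delta}$.

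**Lower-bounding the tail.** With this reduction, I would write the invalidation probability as a standard Gaussian tail: $\Pr[(w+\delta)^T x' < 0] = \Phi(-t_{w+\delta}) = \Pr[Z > t_{w+\delta}]$ for $Z \sim \cN(0,1)$, and since $t_{w+\delta} \leq t_w$, this is at least $\Pr[Z > t_w] = \Phi(-t_w)$. Now introduce the parameter $\beta \geq 1$: write $t_w^2 = \frac{(w^T\mu)^2}{\|\sqrt{D}Uw\|^2}$, so I want a lower bound of the form $\Phi(-t_w) \geq \frac{1}{2}\sqrt{\frac{2e}{\pi}} \frac{\sqrt{\beta-1}}{\beta} e^{-\beta t_w^2/2}$. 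The natural tool is a Gaussian anti-concentration / lower tail bound: $\Pr[Z > s] \geq \frac{1}{\sqrt{2\pi}} \cdot \frac{s}{s^2+1} e^{-s^2/2}$, or the cruder $\Pr[Z > s] \geq$ (something) on an interval. I expect the $\beta$ appears by integrating the density only over the slab $s \in [t_w, \sqrt{\beta}\, t_w]$: namely $\Pr[Z > t_w] \geq \int_{t_w}^{\sqrt{\beta} t_w} \frac{1}{\sqrt{2\pi}} e^{-z^2/2}\,dz \geq \frac{(\sqrt{\beta}-1)t_w}{\sqrt{2\pi}} e^{-\beta t_w^2/2}$, and then one optimizes/bounds $(\sqrt{\beta}-1) t_w$ or substitutes a worst-case value of $t_w$ (e.g.\ $t_w^2$ at the point maximizing $t_w e^{-(\beta-1)t_w^2/2}$, which gives $t_w^2 = \frac{1}{\beta-1}$ and produces the $\frac{\sqrt{\beta-1}}{\beta}\sqrt{e}$-type factor) to land on the stated closed form. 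I would reconcile the constants $\frac{1}{2}\sqrt{\frac{2e}{\pi}}$ carefully at the end.

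**Anticipated main obstacle.** The conceptual steps (Gaussian reduction, monotonicity in the SNR, tail bound) are routine; the delicate part is getting the \emph{exact} constant $\frac{1}{2}\sqrt{\frac{2e}{\pi}}\frac{\sqrt{\beta-1}}{\beta}$ to drop out of the elementary tail estimate, since this depends on precisely which auxiliary inequality is used and whether a term like $t_w^2 = 1/(\beta-1)$ is substituted as a worst case or whether the bound is meant to hold for all $t_w$ with that prefactor. I would also need to be careful about the role of the conditioning on $w^T x' \geq 0$ (the "lies on the original data manifold" and "is a valid recourse" conditions): if the statement is an unconditional lower bound on $\Pr[(w+\delta)^T x' < 0]$, the SNR hypothesis does all the work; if it is conditional, I would need $\Pr[(w+\delta)^T x' < 0, \ w^T x' \geq 0] \geq \Pr[(w+\delta)^Tx' < 0] - \Pr[w^T x' < 0]$ or a direct bivariate-Gaussian argument, which is messier — I expect the intended reading is the cleaner unconditional/one-sided one. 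Finally, I would double-check that no hidden assumption about the bias terms of $f_w$ and $f_{w+\delta}$ (or about $w^T\mu, (w+\delta)^T\mu > 0$) is being swept under the rug, since the ratio hypothesis implicitly assumes both means have the same sign.
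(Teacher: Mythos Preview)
Your high-level plan---reduce to a one-dimensional Gaussian tail, exploit the SNR-ordering hypothesis, then apply an anti-concentration bound---matches the paper's, but two concrete departures are worth flagging. First, the paper does \emph{not} take the unconditional shortcut you lean toward; it works with the intersection event $\Omega = \{w^{\top} x' > 0\}\cap\{(w+\delta)^{\top} x' \le 0\}$ and collapses it to a single one-dimensional integral via an explicit whitening sequence (center by $\mu$, rotate by $U$, scale by $D^{-1/2}$, then apply an orthogonal projection onto the direction of the transformed $w$), arriving at $\tfrac{1}{\sqrt{2\pi}}\int_{c_1}^{c_2} e^{-s^2/2}\,ds$ with $c_1 = -w^{\top}\mu/\|\sqrt{D}Uw\|$ and $c_2 = -(w+\delta)^{\top}\mu/\|\sqrt{D}U(w+\delta)\|$; your SNR hypothesis is exactly the condition $c_1 \le c_2$. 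Your direct ``a linear functional of a Gaussian is Gaussian'' observation is cleaner than this coordinate gymnastics, but it handles the two marginals separately rather than the joint event, so your flagged concern about which event is intended was warranted---the paper means the intersection. Second, you do not need to manufacture the $\beta$-dependent prefactor via a slab-and-optimize argument: the paper writes the integral as $\tfrac{1}{2}\bigl(\mathrm{erfc}(c_1/\sqrt{2}) - \mathrm{erfc}(c_2/\sqrt{2})\bigr)$, discards the $c_2$ term, and then simply invokes a known Chernoff-type lower bound on the complementary error function, $\mathrm{erfc}(c) \ge \sqrt{2e/\pi}\,\tfrac{\sqrt{\beta-1}}{\beta}\,e^{-\beta c^2/2}$ for $\beta \ge 1$ (Chang, 2011), at $c = c_1/\sqrt{2}$. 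So your anxiety about recovering the exact constant disappears once you cite that inequality rather than derive it from scratch.
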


\begin{hproof}
Under the assumption that $x' \sim \cN(\mu, \Sigma)$, a recourse is invalid under a model shift if it is valid under the original model and invalid under the shifted model. This allows us to define the region where $x'$ can be invalidated:
$$\Omega = \{x' \colon w^Tx' > 0 \, \cap  \, (w+\delta)^Tx' \leq 0\}$$

The probability that $x'$ is invalidated can be obtained by integrating over $\Omega$ under the PDF of $\cN(\mu, \Sigma)$.

We can then transform $x'$ and correspondingly $\Omega$, to simplify this integration over a 1-dimensional Gaussian random variable. That is, 
\begin{equation}\label{eq:gef}
P(x \text{ is invalidated}) =  \frac{1}{\sqrt{(2\pi)}} \int_{c_1}^{c_2} \exp{\bigg(-\frac{1}{2}s^2\bigg)} ds
\end{equation}
where 
$\Omega_{s} = \{s \colon [c_1, c_2]\}$,  $c_1 = \frac{- w^T\mu}{\|\sqrt{D}Uw\|}$ and $c_2 =  \frac{-(w+\delta)^T\mu}{\|\sqrt{D}U(w + \delta) \|}$

The above quantity can be represented as a difference in the Gaussian error function. Using the lower bounds on the complementary gaussian error function~\citep{glaisher1871liv} from~\citet{chang2011chernoff}, we obtain our lower bound. Detailed proof is provided in the Appendix. Discussion about other distributions (e.g., Bernoulli, Uniform, Categorical) is also included in the Appendix. 
\end{hproof}

Next we characterize how much more costly recourses can be when they are trained to be robust to model perturbations or \mchanges. In the following theorem, we show that the cost of robust recourses is bounded relative to the cost of recourses that do not account for \mchanges. 
\begin{theorem}\label{thm2}
We consider $x \in \cX$, and $x \sim \nu$ where $\nu$ is a distribution such that $\mathbb{E}_{\nu}[x] = \mu < \infty$, a metric space $(\cX, d(\cdot, \cdot))$ and $d: \cX \times \cX \rightarrow \mathbb{R}_{+}$. Let $d \triangleq \ell_2$ and assume that $(\cX, d)$ has bounded diameter $D = \sup_{x, x' \in \cX} d(x, x')$. Let recourses obtained without accounting for \mchanges and constrained to the manifold be denoted by $x' \sim \nu$, and robust recourses be denoted by $x''$. Let $\delta>0$ be the maximum shift under Eq.~\ref{eq:eq1} for sample $x$. For some $0 < \eta' \ll 1$, and $0<\alpha<1$,  w.h.p. $(1-\eta')$, we have that:
\begin{align}
\begin{split}
    c(x'',x) &\leq c(x',x) + \\
    &\frac{1}{\lambda \|w+\delta\|} \alpha (w + \delta)^T\mu + \sqrt{\frac{D^2}{2}\log{\big(\frac{1}{\eta'}\big)}}
\end{split}
\end{align}
\end{theorem}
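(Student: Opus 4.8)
The plan is to exploit the min--max optimality of the robust recourse $x''$ in Eqn.~\ref{eq:eq3}, compare it against the feasible but sub-optimal non-robust recourse $x'$, and then control the resulting worst-case-loss term using the geometry of the shifted classifier together with a concentration bound for $x' \sim \nu$.

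\textbf{Step 1 (transfer via min--max optimality).} Since $x''$ minimizes $\max_{\delta \in \Delta} \ell(f_{w+\delta}(\cdot),1) + \lambda c(x,\cdot)$ over $\cA$ and the non-robust recourse $x'$ is also an actionable (hence feasible) point, we have
\[
\max_{\delta \in \Delta} \ell(f_{w+\delta}(x''),1) + \lambda c(x,x'') \;\le\; \max_{\delta \in \Delta} \ell(f_{w+\delta}(x'),1) + \lambda c(x,x').
\]
Because $\ell$ is non-negative (it maps into $\mathbb{R}_+$, cf.\ Section~\ref{sec:prelim}), we may drop $\max_{\delta}\ell(f_{w+\delta}(x''),1)\ge 0$ from the left-hand side and divide by $\lambda$ to obtain
\[
c(x,x'') \;\le\; c(x,x') + \tfrac{1}{\lambda}\,\max_{\delta \in \Delta} \ell(f_{w+\delta}(x'),1).
\]
It therefore suffices to upper bound the worst-case loss that the non-robust recourse $x'$ suffers against shifted models.

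\textbf{Step 2 (bounding the worst-case loss).} Let $\delta$ be the maximizing shift of Eqn.~\ref{eq:eq1} for the sample at hand, so that the worst-case loss equals $\ell(f_{w+\delta}(x'),1)$ with $f_{w+\delta}(x') = (w+\delta)^Tx'$. Using that $\ell(\cdot,1)$ is monotone and has slope bounded by $\alpha\in(0,1)$, the loss at $x'$ is bounded by $\alpha$ times a normalized margin term of the form $\tfrac{1}{\norm{w+\delta}}(w+\delta)^Tx'$ (up to lower-order terms). Writing $x' = \mu + (x'-\mu)$ splits this into a deterministic part $\tfrac{\alpha}{\norm{w+\delta}}(w+\delta)^T\mu$ --- which, after the $\tfrac1\lambda$ factor from Step~1, is precisely the middle term of the claim --- and a random fluctuation part $\tfrac{\alpha}{\norm{w+\delta}}(w+\delta)^T(x'-\mu)$.

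\textbf{Step 3 (concentration and combination).} It remains to control the fluctuation. Since $x' \sim \nu$ is supported on $\cX$, which has $\ell_2$-diameter $D$, the scalar $\big\langle (w+\delta)/\norm{w+\delta},\, x'\big\rangle$ ranges over an interval of length at most $D$, so by Hoeffding's inequality, for every $t>0$,
\[
P\!\Big[\tfrac{1}{\norm{w+\delta}}(w+\delta)^T(x'-\mu) > t\Big] \;\le\; \exp\!\Big(-\tfrac{2t^2}{D^2}\Big).
\]
Choosing $t = \sqrt{\tfrac{D^2}{2}\log(1/\eta')}$ makes the right-hand side equal to $\eta'$, so with probability at least $1-\eta'$ the fluctuation is at most $\sqrt{\tfrac{D^2}{2}\log(1/\eta')}$ --- the last term of the claim. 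Substituting the bounds of Steps~2 and 3 into the inequality of Step~1 gives the stated result.

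\textbf{Main obstacle.} The delicate point is the interaction between the inner maximization over $\delta$ and the randomness of $x'$: the worst-case shift $\delta$ in Step~2 is itself a function of the random $x'$, so the Hoeffding bound in Step~3 is not immediately applicable to a fixed direction. This is resolved either by fixing $\delta$ to the worst-case shift associated with the deterministic anchor $x$ (as the statement's phrasing ``maximum shift $\ldots$ for sample $x$'' suggests) or by a union bound over a covering net of the bounded set $\Delta$, which contributes only lower-order terms. A secondary point requiring care is pinning down the slope constant $\alpha$ of $\ell$ that controls the linear-in-margin bound used in Step~2, and making precise the sense in which the ``lower-order terms'' dropped there are negligible relative to the three terms retained in the bound.
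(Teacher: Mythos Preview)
Your proposal is correct and tracks the paper's argument closely (optimality transfer, drop the non-negative loss at $x''$, concentrate the remaining term around its mean), but you reverse the order of the last two operations and use a different concentration tool.

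The paper first observes that $\phi(x')\triangleq \tfrac{1}{\|w+\delta\|}\log\bigl(1+\exp(-(w+\delta)^Tx')\bigr)$ is $1$-Lipschitz in $x'$ (Lemma~\ref{lm21}) and then applies a concentration inequality for $1$-Lipschitz functions on a bounded-diameter space (Lemma~\ref{lm22}, from \citet{van2014probability}) to get $\phi(x')\le \mathbb{E}_\nu[\phi]+\sqrt{\tfrac{D^2}{2}\log(1/\eta')}$ with probability $1-\eta'$; only \emph{afterwards} does it upper-bound $\mathbb{E}_\nu[\phi]$ by $\tfrac{\alpha}{\|w+\delta\|}(w+\delta)^T\mu$ via the slope constant $\alpha$. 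You instead linearize the softplus first and then apply Hoeffding to the linear functional $\tfrac{1}{\|w+\delta\|}(w+\delta)^T(x'-\mu)$. Both routes yield the stated bound (yours actually produces an extra factor $\alpha<1$ in front of the fluctuation term, which you may harmlessly drop). The paper's ordering has the advantage that the factor $\tfrac{1}{\|w+\delta\|}$ arises naturally as the Lipschitz normalization, whereas in your Step~2 that factor is asserted somewhat abruptly; this is the one place I would tighten. Your ``main obstacle'' remark about $\delta$ being potentially data-dependent is well taken: the paper simply fixes $\delta$ as the maximizing shift for the given sample $x$ and does not address the union-bound alternative you mention.
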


\begin{hproof}
By definition, any recourse $x'$ generated without accounting for \mchanges will have a higher loss for Equation~\ref{eq:eq1} compared to the robust recourse $x''$ (note that finding the global minimizer is not guaranteed by Algorithm~\ref{alg:alg0}). 

Using this insight, we can bound the cost difference between the robust and non-robust recourse by a 1-Lipschitz function (i.e. the logistic function):
\begin{align*}
    \begin{split}
        c(x{''},x) - c(x{'},x) \leq \frac{1}{\lambda \|w+\delta\|} \log{\bigl\{1+\exp{-(w+\delta)^T x'} \bigr\}}
    \end{split}
\end{align*}

Assuming a bounded metric on $\cX$, we can upper bound the RHS using Lemma 2 from~\citet{van2014probability} which gives us our bound. Detailed proof is provided in the Appendix.
\end{hproof}

This result suggests that the additional cost of recourse is bounded by the amount of shift admissible in Equation~\ref{eq:eq1}. Note that ~\thref{thm2} applies for general distributions so long as the mean is finite, which is the case for most commonplace distributions like Gaussian, Bernoulli, Multinomial etc. While ~\thref{thm1} demonstrates a lower-bound on the probability that a recourse will be invalidated for Gaussian distributions, we refer the reader to the Appendix~\ref{app:remarks} for a discussion of other distributions, e.g. Bernoulli, Uniform, Categorical.
\section{Experiments}\label{sec:exp}
Here we discuss the detailed experimental evaluation of our framework, \ROAR. First, we evaluate how robust the recourses generated by our framework are to \mchanges caused by real world data distribution shifts. We also assess the validity of the recourses generated by our framework w.r.t. the original model, and further analyze the average cost of these recourses. 
Next, using synthetic data, we analyze how varying the degree (magnitude) of data distribution shift impacts the robustness and validity of the recourses output by our framework and other baselines.  

\subsection{Experimental Setup}\label{sec:setup}

\paragraph{Real world data} We evaluate our framework on \mchanges induced by real world data distribution shifts. To this end, we leverage three real world datasets which capture different kinds of data distribution shifts, namely, temporal shift, geospatial shift, and data correction shift ~\cite{rawal2020can}. 
Our first dataset is the widely used and publicly available \textbf{German credit} dataset~\cite{Dua:2019} from the UCI repository. This dataset captures demographic (age, gender), personal (marital status), and financial (income, credit duration) details of about 1000 loan applicants. Each applicant is labeled as either a good customer or a bad customer depending on their credit risk. Two versions of this dataset have been released, with the second version incorporating corrections to coding errors in the first dataset~\cite{german2}. Accordingly, this dataset captures the \textbf{data correction} shift. 
Our second dataset is the \textbf{Small Business Administration (SBA) case} dataset~\cite{SBA}.  This dataset contains information pertaining to $2102$ small business loans approved by the state of California during the years of $1989-2012$, and captures \textbf{temporal shifts} in the data. It comprises of about $24$ features capturing various details of the small businesses including zip codes, business category (real estate vs. rental vs. leasing), number of jobs created, and financial status of the business. It also contains information about whether a business has defaulted on a loan or not which we consider as the class label. 
Our last dataset contains \textbf{student performance} records of $649$ students from two Portuguese secondary schools, Gabriel Pereira (GP) and Mousinho da Silveira (MS)~\cite{Dua:2019, student}, and captures \textbf{geospatial shift}. It comprises of information about the academic background (grades, absences, access to internet, failures etc.) of each student along with other demographic attributes (age, gender). 
Each student is assigned a class label of above average or not depending on their final grade. 

\paragraph{Synthetic data}
We generate 
a synthetic dataset with $1$K samples and two dimensions to analyze how the degree (magnitude) of data distribution shifts impacts the robustness and validity of the recourses output by our framework and other baselines. Each instance $x$ is generated as follows: First, we randomly sample the class label $y \in \{0,1\}$ corresponding to the instance $x$. Conditioned upon the value of $y$, we then sample the instance $x$ as: $x \sim \mathcal{N}(\mu_y, \Sigma_y)$. We choose $\mu_{0}=[-2,-2]^{T}$ and $\mu_1=[+2,+2]^{T}$, and $\Sigma_{0} = \Sigma_{1} = 0.5\bI$ where $\mu_0$, $\Sigma_0$ and $\mu_1$, $\Sigma_1$ denote the means and covariance of the Gaussian distributions from which instances in class 0 and class 1 are sampled respectively. A scatter plot of the samples resulting from this generative process and the decision boundary of a logistic regression model fit to this data are shown in Figure~\ref{fig:sim_og}. 
In our experimental evaluation, we consider different kinds of shifts to this synthetic data: 
\begin{figure*}[ht!]
\vspace{-0.1in}
\centering 
\begin{subfigure}{0.24\linewidth}
\centering
\includegraphics[width=0.8\linewidth]{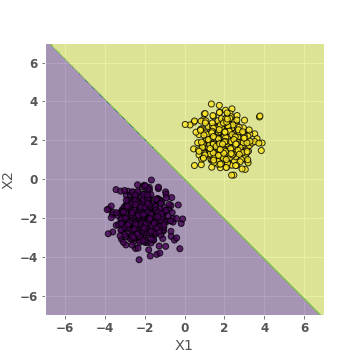}
\caption{Original data \& $\blackbox_1$}
\label{fig:sim_og}
\end{subfigure}
\begin{subfigure}{0.24\linewidth}
\centering
\includegraphics[width=0.8\linewidth]{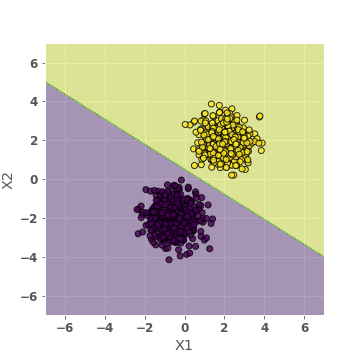}
\caption{Mean shift}
\label{fig:sim_mean}
\end{subfigure}
\begin{subfigure}{0.24\linewidth}
\centering
\includegraphics[width=0.8\linewidth]{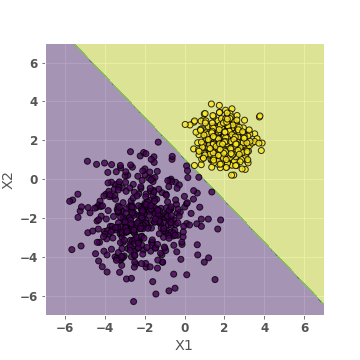}
\caption{Variance shift}
\label{fig:sim_var}
\end{subfigure}
\begin{subfigure}{0.24\linewidth}
\centering
\includegraphics[width=0.8\linewidth]{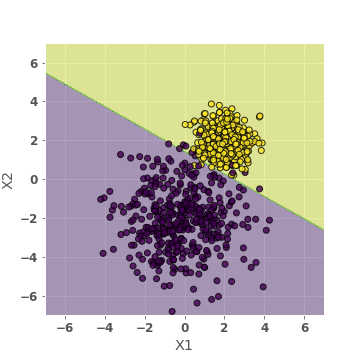}
\caption{Mean \& Variance shift}
\label{fig:sim_mv}
\end{subfigure}
\caption{Synthetic data and examples of model shift. From left to right we have (a) original synthetic dataset, (b) shifted data and decision boundary after mean shift, (c) shifted data and decision boundary after variance shift, and (d) shifted data and decision boundary after mean and variance shift}
\vspace{-0.2in}
\label{simdata}
\end{figure*}
\par \textbf{(i) Mean shift}: To generated shifted data, we leverage the same approach as above but shift the mean of the Gaussian distribution associated with class $0$ i.e., $x \sim \mathcal{N}(\mu'_y, \Sigma_y)$ where $\mu'_0 = \mu_0 + [\alpha, 0]^T$ and $\mu'_1 = \mu_1$. 
Note that we only shift the mean of one of the features of class $0$ so that the slope of the decision boundary of a linear model we fit to this shifted data changes (relative to the linear model fit on the original data), while the intercept remains the same. Figure~\ref{fig:sim_mean} shows shifted data with $\alpha = 1.5$. 
\par \textbf{(ii) Variance shift}: Here, we leverage the same generative process as above, but instead of shifting the mean, we shift the variance of the Gaussian distribution associated with class $0$ i.e., i.e., $x \sim \mathcal{N}(\mu_y, \Sigma'_y)$ where $\Sigma'_{0} = (1+\beta)\Sigma_{0}$ and $\Sigma'_{1} = \Sigma'_1$ for some increment $\beta \in \mathbb{R}$. The net result here is that the intercept of the decision boundary of a linear model we fit to this shifted data changes (relative to the linear model fit on the original data), while the slope remains unchanged. Figure~\ref{fig:sim_var} shows shifted data with $\beta = 3$. 
\par \textbf{(ii) Mean and variance shift}: Here, we change both the mean and variance of the Gaussian distribution associated with class $0$ simultaneously (Figure~\ref{fig:sim_mv}). 
It can be seen that there are noticeable changes to both the slope and intercept of the decision boundary compared to Figure~\ref{fig:sim_og}. 


\paragraph{Predictive models}
We generate recourses for a variety of linear and non-linear models: deep neural networks (DNNs), SVMs, and logistic regression (LR). Here, we present results for a 3-layer DNN and LR; remaining results are included in the Appendix. Results presented here are representative of those for other model families.

\paragraph{Baselines}
We compare our framework, \ROAR, to the following state-of-the-art baselines: (i) counterfactual explanations (CFE) framework outlined by~\citet{wachter2017counterfactual}, (ii) actionable recourse (AR) in linear classification~\cite{Ustun_2019}, and (iii) causal recourse framework (MINT) proposed by~\citet{karimi2020algorithmic}. While CFE leverages gradient computations to find counterfactuals, AR employs a mixed integer programming based approach to find counterfactuals that are actionable. 
The MINT framework operates on top of existing approaches for finding nearby counterfactuals. We use the MINT framework on top of CFE and ROAR and refer to these two approaches as MINT and ROAR-MINT respectively. As the MINT framework requires access to the underlying causal graph, we experiment with MINT and ROAR-MINT only on the German credit dataset for which such a causal graph is available. 

\paragraph{Cost functions}
Our framework, \ROAR, and all the other baselines we use rely on a cost function $c$ that measures the cost (or effort) required to act upon the prescribed recourse
. Furthermore, our approach as well as several other baselines require the cost function to be differentiable. So, we consider two cost functions in our experimentation: $\ell_1$ distance between the original instance and the counterfactual, and a cost function learned from pairwise feature comparison inputs (PFC)~\citep{karimi2020survey,Ustun_2019,rawal2020beyond}. 
PFC uses the Bradley-Terry model to map pairwise feature comparison inputs provided by end users to the cost required to act upon the prescribed recourse for any given instance $x$. For more details on this cost function, please refer to~\citet{rawal2020beyond}. In our experiments, we follow the same procedure as~\citet{rawal2020beyond} and simulate the pairwise feature comparison inputs. 

\paragraph{Setting and implementation details}
We partition each of our synthetic and real world datasets into two parts: initial data ($D_1$) and shifted data ($D_2$). In the case of real world datasets, $D_1$ and $D_2$ can be logically inferred from the data itself -- e.g., in case of the German credit dataset, we consider the initial version of the dataset as $D_1$ and the corrected version of the dataset as $D_2$. 
In the case of synthetic datasets, we generate $D_1$ and $D_2$ as described earlier where $D_2$ is generated by shifting $D_1$ (See "Synthetic data" in Section~\ref{sec:setup}). 

We use 5-fold cross validation throughout our real world and synthetic experiments. On $D_1$, we use 4 folds to train predictive models and the remaining fold to generate and evaluate recourses. We repeat this process 5 times and report averaged values of our evaluation metrics. We leverage $D_2$ only to train the shifted models $\blackbox_2$. More details about the data splits, model training, and performance of the predictive models are included in the Appendix.

We use binary cross entropy loss and the Adam optimizer to operationalize our framework, \ROAR. 
Our framework, \ROAR, has the following parameters: the set of acceptable perturbations $\Delta$ (defined in practice by $\delta_{max}$) and the tradeoff parameter $\lambda$. In our experiments on evaluating robustness to real world shifts, we choose $\delta_{max}=0.1$ given that continuous features are scaled to zero mean and unit variance. Furthermore, in each setting, we choose the $\lambda$ that maximizes the recourse validity of $\blackbox_1$ (more details in Section 5.1 "Metrics" and Appendix). 
In case of our synthetic experiments where we assess the impact of the degree (magnitude) of data distribution shift, features are not normalized, so we do a grid search for both $\delta_{max}$ and $\lambda$. First, we choose the largest $\delta_{max}$ that maximizes the recourse validity of $\blackbox_1$  
and then set $\lambda$ in a similar fashion (more details in Appendix). 
We set the parameters of the baselines using techniques discussed in the original works~\cite{wachter2017counterfactual, karimi2020algorithmic, Ustun_2019} and employ a similar grid search approach if unspecified.

Following the precedents set forth in \cite{Ustun_2019} and \cite{rawal2020beyond}, we adapt AR and \ROAR to non-linear models by first generating local linear approximations of these models using LIME~\cite{ribeiro2016should}. We refer to these variants as AR-LIME and \ROAR-LIME respectively. 

\paragraph{Metrics.}
We consider two metrics in our evaluation: 1) \textbf{\emph{Avg Cost}} is defined as the average cost incurred to act upon the prescribed recourses where the average is computed over all the instances for which a given algorithm provides recourse. Recall that we consider two notions of cost in our experiments -- $\ell_1$ distance between the original instance and the counterfactual, costs learned from pairwise feature comparisons (PFC) (See "Cost Functions" in Section~\ref{sec:setup}). 2) \textbf{\emph{Validity}} is defined as the fraction of instances for which acting upon the prescribed recourse results in the desired prediction. Note that validity is computed w.r.t. a given model. 

\subsection{Robustness to real world shifts}
\begin{table*}[ht!]
\resizebox{\linewidth}{!}{
\large
\begin{tabular}{|c|c|c|ccccccccc}
\hline
\multicolumn{3}{|c|}{} & \multicolumn{3}{c|}{Correction Shift} & \multicolumn{3}{c|}{Temporal Shift} & \multicolumn{3}{c|}{Geospatial Shift} \\ \hline
Model & Cost & Recourse & \multicolumn{1}{c|}{Avg Cost} & \multicolumn{1}{c|}{$\blackbox_1$ Validity} & \multicolumn{1}{c|}{$\blackbox_2$ Validity} & \multicolumn{1}{c|}{Avg Cost} & \multicolumn{1}{c|}{$\blackbox_1$ Validity} & \multicolumn{1}{c|}{$\blackbox_2$ Validity} & \multicolumn{1}{c|}{Avg Cost} & \multicolumn{1}{c|}{$\blackbox_1$ Validity} & \multicolumn{1}{c|}{$\blackbox_2$ Validity} \\ \hline
\multirow{10}{*}{LR} & \multirow{4}{*}{L1} & CFE & 1.02 $\pm$ 0.18 & 1.00 $\pm$ 0.00 & \multicolumn{1}{c|}{0.54 $\pm$ 0.27} & 3.57 $\pm$ 1.14 & 1.00 $\pm$ 0.00 & \multicolumn{1}{c|}{0.31 $\pm$ 0.09} & 8.37 $\pm$ 0.73 & 0.98 $\pm$ 0.03 & \multicolumn{1}{c|}{0.29 $\pm$ 0.09} \\ \cline{3-3}
 &  & AR & 0.85 $\pm$ 0.14 & 1.00 $\pm$ 0.00 & \multicolumn{1}{c|}{0.53 $\pm$ 0.21} & 1.50 $\pm$ 0.28 & 1.00 $\pm$ 0.00 & \multicolumn{1}{c|}{0.16 $\pm$ 0.06} & 5.29 $\pm$ 0.28 & 1.00 $\pm$ 0.00 & \multicolumn{1}{c|}{0.43 $\pm$ 0.14} \\ \cline{3-3}
 &  & ROAR & 3.13 $\pm$ 0.32 & 1.00 $\pm$ 0.00 & \multicolumn{1}{c|}{0.94 $\pm$ 0.08} & 3.14 $\pm$ 0.25 & 0.99 $\pm$ 0.01 & \multicolumn{1}{c|}{\textbf{0.98 $\pm$ 0.02}} & 10.88 $\pm$ 1.67 & 1.00 $\pm$ 0.00 & \multicolumn{1}{c|}{\textbf{0.67 $\pm$ 0.19}} \\ \cline{3-3}
 &  & MINT & 4.73 $\pm$ 1.56 & 1.00 $\pm$ 0.00 & \multicolumn{1}{c|}{0.93 $\pm$ 0.07} & NA & NA & \multicolumn{1}{c|}{NA} & NA & NA & \multicolumn{1}{c|}{NA} \\ \cline{3-3}
 & \multicolumn{1}{l|}{} & \multicolumn{1}{l|}{ROAR-MINT} & 6.77 $\pm$ 0.35 & 1.00 $\pm$ 0.00 & \multicolumn{1}{c|}{\textbf{1.00 $\pm$ 0.00}} & NA & NA & \multicolumn{1}{c|}{NA} & NA & NA & \multicolumn{1}{c|}{NA} \\ \cline{2-12} 
 & \multirow{5}{*}{PFC} & CFE & 0.03 $\pm$ 0.02 & 1.00 $\pm$ 0.00 & \multicolumn{1}{c|}{0.56 $\pm$ 0.33} & 0.24 $\pm$ 0.09 & 1.00 $\pm$ 0.00 & \multicolumn{1}{c|}{0.26 $\pm$ 0.11} & 0.34 $\pm$ 0.04 & 1.00 $\pm$ 0.00 & \multicolumn{1}{c|}{0.18 $\pm$ 0.10} \\ \cline{3-3}
 &  & AR & 0.09 $\pm$ 0.02 & 1.00 $\pm$ 0.00 & \multicolumn{1}{c|}{0.54 $\pm$ 0.27} & 0.11 $\pm$ 0.02 & 1.00 $\pm$ 0.00 & \multicolumn{1}{c|}{0.09 $\pm$ 0.05} & 0.32 $\pm$ 0.03 & 1.00 $\pm$ 0.00 & \multicolumn{1}{c|}{0.24 $\pm$ 0.11} \\ \cline{3-3}
 &  & ROAR & 0.36 $\pm$ 0.08 & 1.00 $\pm$ 0.00 & \multicolumn{1}{c|}{\textbf{1.00 $\pm$ 0.00}} & 0.44 $\pm$ 0.12 & 0.99 $\pm$ 0.01 & \multicolumn{1}{c|}{\textbf{0.98 $\pm$ 0.01}} & 1.20 $\pm$ 0.10 & 1.00 $\pm$ 0.00 & \multicolumn{1}{c|}{\textbf{0.91 $\pm$ 0.07}} \\ \cline{3-3}
 &  & MINT & 1.00 $\pm$ 1.15 & 1.00 $\pm$ 0.00 & \multicolumn{1}{c|}{0.95 $\pm$ 0.08} & NA & NA & \multicolumn{1}{c|}{NA} & NA & NA & \multicolumn{1}{c|}{NA} \\ \cline{3-3}
 &  & \multicolumn{1}{l|}{ROAR-MINT} & 1.23 $\pm$ 0.05 & 1.00 $\pm$ 0.00 & \multicolumn{1}{c|}{1.00 $\pm$ 0.00} & NA & NA & \multicolumn{1}{c|}{NA} & NA & NA & \multicolumn{1}{c|}{NA} \\ \hline
\multirow{10}{*}{NN} & \multirow{5}{*}{L1} & CFE & 0.55 $\pm$ 0.10 & 1.00 $\pm$ 0.00 & \multicolumn{1}{c|}{0.47 $\pm$ 0.06} & 3.78 $\pm$ 0.68 & 1.00 $\pm$ 0.00 & \multicolumn{1}{c|}{0.52 $\pm$ 0.09} & 10.09 $\pm$ 0.71 & 1.00 $\pm$ 0.00 & \multicolumn{1}{c|}{0.48 $\pm$ 0.09} \\ \cline{3-3}
 &  & AR-LIME & 0.38 $\pm$ 0.15 & 0.16 $\pm$ 0.10 & \multicolumn{1}{c|}{0.31 $\pm$ 0.06} & 1.39 $\pm$ 0.13 & 0.59 $\pm$ 0.11 & \multicolumn{1}{c|}{0.65 $\pm$ 0.17} & 9.02 $\pm$ 1.57 & 0.76 $\pm$ 0.06 & \multicolumn{1}{c|}{0.83 $\pm$ 0.10} \\ \cline{3-3}
 &  & ROAR-LIME & 1.83 $\pm$ 0.19 & 0.78 $\pm$ 0.06 & \multicolumn{1}{c|}{0.72 $\pm$ 0.10} & 4.90 $\pm$ 0.24 & 0.98 $\pm$ 0.02 & \multicolumn{1}{c|}{\textbf{0.97 $\pm$ 0.02}} & 21.05 $\pm$ 3.58 & 1.00 $\pm$ 0.00 & \multicolumn{1}{c|}{\textbf{0.97 $\pm$ 0.03}} \\ \cline{3-3}
 &  & MINT & 2.24 $\pm$ 1.25 & 0.81 $\pm$ 0.02 & \multicolumn{1}{c|}{0.63 $\pm$ 0.11} & NA & NA & \multicolumn{1}{c|}{NA} & NA & NA & \multicolumn{1}{c|}{NA} \\ \cline{3-3}
 &  & \multicolumn{1}{l|}{ROAR-MINT} & 8.59 $\pm$ 1.70 & 0.90 $\pm$ 0.03 & \multicolumn{1}{c|}{\textbf{0.84 $\pm$ 0.04}} & NA & NA & \multicolumn{1}{c|}{NA} & NA & NA & \multicolumn{1}{c|}{NA} \\ \cline{2-12} 
 & \multirow{5}{*}{PFC} & CFE & 0.06 $\pm$ 0.02 & 1.00 $\pm$ 0.00 & \multicolumn{1}{c|}{0.51 $\pm$ 0.12} & 0.19 $\pm$ 0.06 & 1.00 $\pm$ 0.00 & \multicolumn{1}{c|}{0.50 $\pm$ 0.13} & 0.48 $\pm$ 0.06 & 1.00 $\pm$ 0.00 & \multicolumn{1}{c|}{0.30 $\pm$ 0.14} \\ \cline{3-3}
 &  & AR-LIME & 0.06 $\pm$ 0.03 & 0.49 $\pm$ 0.11 & \multicolumn{1}{c|}{0.56 $\pm$ 0.15} & 0.11 $\pm$ 0.01 & 0.54 $\pm$ 0.08 & \multicolumn{1}{c|}{0.62 $\pm$ 0.12} & 0.78 $\pm$ 0.15 & 0.84 $\pm$ 0.06 & \multicolumn{1}{c|}{0.82 $\pm$ 0.11} \\ \cline{3-3}
 &  & ROAR-LIME & 0.64 $\pm$ 0.08 & 0.85 $\pm$ 0.07 & \multicolumn{1}{c|}{\textbf{0.82 $\pm$ 0.05}} & 0.37 $\pm$ 0.07 & 0.99 $\pm$ 0.01 & \multicolumn{1}{c|}{\textbf{0.99 $\pm$ 0.0}} & 1.66 $\pm$ 0.21 & 1.00 $\pm$ 0.00 & \multicolumn{1}{c|}{\textbf{0.97 $\pm$ 0.04}} \\ \cline{3-3}
 &  & MINT & 0.60 $\pm$ 0.16 & 0.82 $\pm$ 0.07 & \multicolumn{1}{c|}{0.64 $\pm$ 0.15} & NA & NA & \multicolumn{1}{c|}{NA} & NA & NA & \multicolumn{1}{c|}{NA} \\ \cline{3-3}
 &  & \multicolumn{1}{l|}{ROAR-MINT} & 0.60 $\pm$ 0.07 & 0.91 $\pm$ 0.04 & \multicolumn{1}{c|}{0.81 $\pm$ 0.04} & NA & NA & \multicolumn{1}{c|}{NA} & NA & NA & \multicolumn{1}{c|}{NA} \\ \cline{1-12}
\end{tabular}
}
\caption{Avg Cost, $\blackbox_1$ (original) validity, and $\blackbox_2$ (shifted model) validity of recourses across different real world datasets.
Recourses that leverage our framework \ROAR are more robust (higher $\blackbox_2$ validity) compared to those generated by existing baselines. 
}
\vspace{-0.2in}
\label{rwrecourse}
\end{table*}

Here, we evaluate the robustness of the recourses output by our framework, \ROAR, as well as the baselines. A recourse finding algorithm can be considered robust if the recourses output by the algorithm remain valid even if the underlying model has changed. To evaluate this, we first leverage our approach and other baselines to find recourses of instances in our test sets w.r.t. the initial model $\blackbox_1$. We then compute the \emph{validity} of these recourses w.r.t. the shifted model $\blackbox_2$ which has been trained on the shifted data. Let us refer to this as $\blackbox_2$ \emph{validity}. The higher the value of $\blackbox_2$ \emph{validity}, the more robust the recourse finding method. Table~\ref{rwrecourse} shows the $\blackbox_2$ \emph{validity} metric computed for different algorithms across different real world datasets. 

\par It can be seen that recourse methods that use our framework, \ROAR and ROAR-MINT, achieve the highest $\blackbox_2$ \emph{validity} across all datasets. In fact, methods that use our framework do almost twice as good compared to other baselines on this metric, indicating that \ROAR based recourse methods are quite robust. MINT turns out to be the next best performing baseline. This may be explained by the fact that MINT accounts for the underlying causal graphs when generating recourses. 

We also assess if the robustness achieved by our framework is coming at a cost i.e., by sacrificing \emph{validity} on the original model or by increasing \emph{avg cost}. Table~\ref{rwrecourse} shows the results for the same. It can be seen that \ROAR based recourses achieve higher than $95\%$ $\blackbox_1$ \emph{validity} in all but two settings. We compute the \emph {avg cost} of the recourses output by all the algorithms on various datasets and find that \ROAR typically has a higher \emph{avg cost} (both under $\ell_1$ and PFC cost functions) compared to CFE and AR baselines. However, MINT and ROAR-MINT seem to exhibit highest \emph{avg costs} and are the worst performing algorithms according to this metric, likely because adhering to the causal graph incurs additional cost. 

\hideh{
\par 
To better understand the differences in robustness across the different shift settings, we examine the average logistic regression decision boundary distances between $\blackbox_1$ and $\blackbox_2$, reported in Table \ref{rwcoef} with standard deviation. We observe that the highest degree of model shift occurs in the geospatial shift setting. Returning to the logistic regression setting reported in Table \ref{rwrecourse}, we also observe that $\blackbox_2$ validity is lower after geospatial shift in comparison to temporal or correction shift given $\delta_{max}=0.1$. Finally, further analysis in Appendix \ref{app:addexp} shows that $\blackbox_2$ validity improves in the geospatial setting after increasing $\delta_{max}$ to $0.2$, but this comes at a higher cost. Taken together, these observations suggest that the higher the magnitude of the distribution shift, the harder it is to achieve robustness. We probe this hypothesis in greater detail in the following section. 

\begin{table}[h]
\begin{tabular}{|l|l|l|l|}
\hline
 & Correction & Temporal & Geospatial \\ \hline
$W$ & 0.30 $\pm$ 0.05 & 1.32 $\pm$ 0.15 & 2.19 $\pm$ 0.15 \\ \hline
$W_0$ & 0.08 $\pm$ 0.05 & 0.22 $\pm$ 0.10 & 0.90 $\pm$ 0.52 \\ \hline
\end{tabular}
\caption{Average $\ell_2$ distance between $\blackbox_1$ and $\blackbox_2$ coefficients ($W$) and intercepts ($W_0$) using the logistic regression classifier.}
\label{rwcoef}
\end{table}}

\subsection{Impact of the degree of data distribution shift on recourses}
\begin{figure*}[ht!]
\centering 
\begin{subfigure}{0.3\linewidth}
\centering
\includegraphics[width=0.8\linewidth]{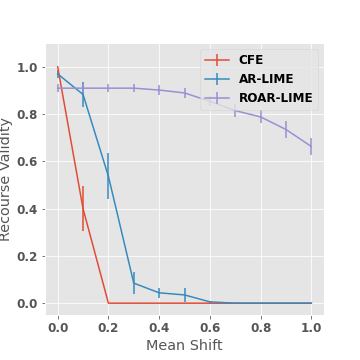}
\label{fig:sim_nnl1mean}
\end{subfigure}
\begin{subfigure}{0.3\linewidth}
\centering
\includegraphics[width=0.8\linewidth]{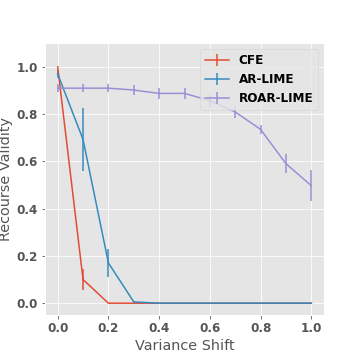}
\label{fig:sim_nnl1var}
\end{subfigure}
\begin{subfigure}{0.3\linewidth}
\centering
\includegraphics[width=0.8\linewidth]{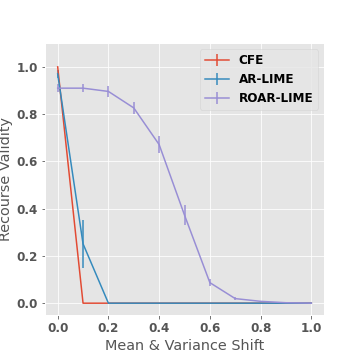}
\label{fig:mvnnl1}
\end{subfigure}
\begin{subfigure}{0.3\linewidth}
\centering
\includegraphics[width=0.8\linewidth]{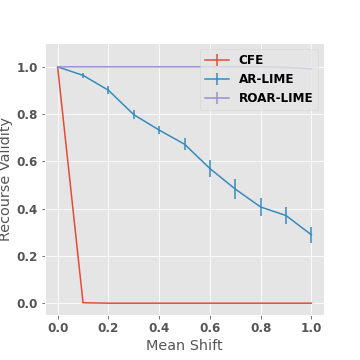}
\label{fig:sim_nnpfcmean}
\end{subfigure}
\begin{subfigure}{0.3\linewidth}
\centering
\includegraphics[width=0.8\linewidth]{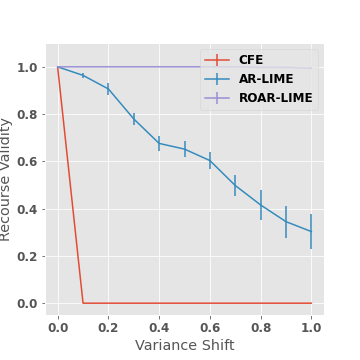}
\label{fig:sim_nnpfcvar}
\end{subfigure}
\begin{subfigure}{0.3\linewidth}
\centering
\includegraphics[width=0.8\linewidth]{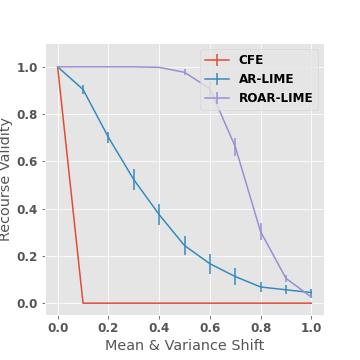}
\label{fig:mvnnpfc}
\end{subfigure}
\caption{Impact of the degree of data distribution shift on validity of recourse: DNN classifier with $\ell_1$ cost function (top row), DNN classifier with PFC cost function (bottom row); Validity of the recourses generated by all methods drops as degree (magnitude) of the shift increases; The drop in the validity is much smaller for our method \ROAR-LIME compared to other baselines. 
}
\vspace{-0.2in}
\label{simdata_res}
\end{figure*}
Here, we assess how different kinds of distribution shifts and the magnitude of these shifts impact the robustness of recourses output by our framework and other baselines. 
To this end, we leverage our synthetic datasets and introduce mean shifts, variance shifts, and combination shifts (both mean and variance shifts) of different magnitudes by varying $\alpha$ and $\beta$ (See "Synthetic data" in Section~\ref{sec:setup}). We then leverage these different kinds of shifted datasets to construct shifted models and then assess the \emph{validity} of the recourses output by our framework and other baselines w.r.t. the shifted models. 

We generate recourses using our framework and baselines CFE and AR for different predictive models (LR, DNN) and cost functions ($\ell_1$ distance, PFC). Figure~\ref{simdata_res} captures the results of this experiment for DNN model both with $\ell_1$ distance and PFC cost functions. Results with other models are included in the Appendix. 
It can be seen that the x-axis of each of these plots captures the magnitude of the dataset shift, and the y-axis captures the \emph{validity} of the recourses w.r.t. the corresponding shifted model. Standard error bars obtained by averaging the results over 5 runs are also shown.

It can be seen that as the magnitude of the distribution shift increases, \emph{validity} of the recourses generated by all the methods starts dropping. This trend prevailed across mean, variance, and combination (mean and variance) shifts. It can also be seen that the rate at which \emph{validity} of the recourses generated by our method, \ROAR-LIME, drops is much smaller compared to that of other baselines CFE and AR-LIME. Furthermore, our method exhibits the highest \emph{validity} compared to the baselines as the magnitude of the distribution shift increases.  CFE seems to be the worst performing baseline and the \emph{validity} of the recourses generated by CFE drops very sharply even at small magnitudes of distribution shifts. 


\section{Conclusions \& Future Work}\label{sec:discuss}
We proposed a novel framework, \roar (\ROAR), to address the critical but under-explored issue of recourse robustness to model updates. To this end, we introduced a novel minimax objective to generate recourses that are robust to model shifts, and leveraged adversarial training to optimize this objective. We also presented novel theoretical results which demonstrate that recourses output by state-of-the-art algorithms are likely to be invalidated in the face of \mchanges, underscoring the necessity of \ROAR. Furthermore, we also showed that the additional cost incurred by robust recourses generated by \ROAR are bounded. Extensive experimentation with real world and synthetic datasets demonstrated that recourses using \ROAR are highly robust to model shifts induced by a range of data distribution shifts. Our work also paves the way for further research into techniques for generating robust recourses. For instance, it would be valuable to further analyze the tradeoff between recourse robustness and cost to better understand the impacts to affected individuals.  
\bibliography{robustrecourse}
\bibliographystyle{plainnat} 

\clearpage
\appendix

\section{Appendix}

\subsection{Notation and preliminaries}

We consider the metric space $(\cX, d(\cdot, \cdot))$ where $d: \cX \times \cX \rightarrow \mathbb{R}_{+}$. 
We consider $\ell(\cdot, \cdot)$ to be the cross-entropy loss $$\ell_{\text{log}}(\blackbox(x),y) \triangleq -y\log{\sigma(\blackbox(x))} - (1-y)\log{\sigma((1-\blackbox(x)))}$$ (where $\sigma(x) = \frac{1}{1+\exp(-x)}$ is the sigmoid function) or the $\ell_2$ loss. $c(x, x'): \cX \times \cX \rightarrow \mathbb{R}_{+}$ is any differentiable cost function as defined in Section~\ref{sec:prelim}. Assume that $(\cX, d)$ has bounded diameter, i.e. $D = \sup_{x,x' \in \cX} < \infty$. We restrict to the class of linear models i.e. $\blackbox(x) = w^Tx$. W.l.o.g, we assume no bias term. As in Equation~\ref{eq:eq1}, the robust models are trained with additive shifts, i.e. $\blackbox_{\delta}(x) = (w+\delta)^Tx$.

Let $x''$ be the recourse obtained as the solution of the proposed objective~\ref{eq:eq3}. That is:

\begin{equation}\label{eq:rr}
    x'' = {\arg\min}_{x'' \in \cA_x } \max_{\delta \in \Delta} \lambda c(x, x'') + \ell(\blackbox_{\delta}(x''),1)
\end{equation}

Denote $$\delta =  \argmax_{\delta \in \Delta} \ell(\blackbox_{\delta}(x{''}), 1) + \lambda c(x{''},x)$$


\begin{lemma}[]\label{lm21}
$\phi(x) = \log{(1 + \exp{(-w^Tx)})}$ is Lipschitz with $L=\|w\|_2$.
\end{lemma}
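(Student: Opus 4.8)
The plan is to bound the operator norm of the gradient of $\phi$, since a differentiable function is Lipschitz with constant equal to the supremum of $\|\nabla\phi\|_2$ over its domain. First I would compute $\nabla_x \phi(x)$ directly. Writing $u = -w^Tx$, we have $\phi(x) = \log(1+e^{u})$, so $\frac{d}{du}\log(1+e^u) = \frac{e^u}{1+e^u} = \sigma(u)$, the sigmoid. By the chain rule, $\nabla_x \phi(x) = \sigma(-w^Tx)\cdot(-w) = -\sigma(-w^Tx)\, w$.

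Next I would take norms: $\|\nabla_x \phi(x)\|_2 = |\sigma(-w^Tx)|\,\|w\|_2$. Since the sigmoid function satisfies $0 < \sigma(t) < 1$ for all real $t$, we get $\|\nabla_x \phi(x)\|_2 \leq \|w\|_2$ for every $x$. Then by the mean value inequality (or integrating the gradient along the segment between two points), for any $x_1, x_2 \in \cX$,
\begin{equation*}
|\phi(x_1) - \phi(x_2)| \leq \sup_{x} \|\nabla_x\phi(x)\|_2 \, \|x_1 - x_2\|_2 \leq \|w\|_2 \, \|x_1 - x_2\|_2,
\end{equation*}
which is exactly the claim that $\phi$ is Lipschitz with $L = \|w\|_2$.

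There is essentially no hard part here: the only thing to be a little careful about is that $\phi$ is smooth everywhere (the argument $1 + e^{-w^Tx}$ is always strictly positive, so no issues with the logarithm), which justifies applying the mean value inequality on the whole space, and that the sigmoid is uniformly bounded by $1$ rather than merely bounded pointwise. If one prefers to avoid calculus entirely, an alternative is to note that $t \mapsto \log(1+e^{-t})$ is $1$-Lipschitz on $\mathbb{R}$ (its derivative is $-\sigma(-t) \in (-1,0)$), and then compose with the linear map $x \mapsto w^Tx$, which is $\|w\|_2$-Lipschitz from $(\cX, \ell_2)$ to $\mathbb{R}$; the composition of an $L_1$-Lipschitz and an $L_2$-Lipschitz map is $L_1 L_2$-Lipschitz, giving $L = \|w\|_2$ again. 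I would present the gradient-norm argument as the main line since it is the most self-contained.
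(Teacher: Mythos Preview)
Your proof is correct and follows essentially the same approach as the paper: compute $\nabla_x\phi(x) = -\sigma(-w^Tx)\,w$, bound its norm by $\|w\|_2$ using $\sigma\in(0,1)$, and conclude via the mean value inequality. The paper's version is just a one-line supremum computation of the same gradient norm.
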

\begin{proof}
\begin{equation}
L = \sup_{x} \|\phi'(x)\|_2 = \sup_{x} \|- \frac{\exp{(-w^Tx)}}{1+\exp{(-w^Tx)}} w \|_2 = \|w\|_2.  
\end{equation}
\end{proof}

\begin{lemma}\label{lm22}~\citet{van2014probability} 
If $(\cX, d)$ has bounded diameter $D = \sup_{x, x' \in \cX} d(x, x')$, then for any probability measure $\nu$ on $\cX$ and 1-Lipschitz function $\phi(x)$ over $(\cX,d)$, 
\begin{equation}
    \nu\{\phi \geq \mathbb{E}_{\nu} \phi + \epsilon \} \leq \exp{\frac{-2\epsilon^2}{D^2}}
\end{equation}
\end{lemma}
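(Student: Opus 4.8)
The plan is to recognize this as a one‑dimensional concentration statement in disguise: although $\phi$ lives on a metric space, only the scalar random variable $Y \bydef \phi(X)$ with $X \sim \nu$ enters the conclusion, and the bounded‑diameter hypothesis forces $Y$ to be a bounded random variable. First I would observe that for any $x, x' \in \cX$ the $1$‑Lipschitz property gives $|\phi(x) - \phi(x')| \leq d(x,x') \leq D$, so $\phi$ has oscillation at most $D$ on $\cX$; hence there is a constant $m$ (for instance $m = \inf_{x \in \cX}\phi(x)$) with $\phi(X) \in [m, m+D]$ $\nu$‑almost surely. In particular the centered variable $Y - \mathbb{E}_\nu Y$ is mean zero and supported in an interval of length at most $D$.

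Next I would run the standard Chernoff/exponential‑moment argument. For any $t > 0$, Markov's inequality gives
\[
\nu\{\phi \geq \mathbb{E}_\nu \phi + \epsilon\} = \nu\bigl\{ e^{t(Y - \mathbb{E}_\nu Y)} \geq e^{t\epsilon}\bigr\} \leq e^{-t\epsilon}\, \mathbb{E}_\nu\bigl[ e^{t(Y - \mathbb{E}_\nu Y)} \bigr].
\]
The key ingredient is then Hoeffding's lemma: a mean‑zero random variable supported in an interval of length $\ell$ has $\mathbb{E}[e^{tZ}] \leq e^{t^2 \ell^2 / 8}$. Applying it with $\ell = D$ and $Z = Y - \mathbb{E}_\nu Y$ yields $\nu\{\phi \geq \mathbb{E}_\nu \phi + \epsilon\} \leq \exp\bigl(-t\epsilon + t^2 D^2/8\bigr)$. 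Finally I would optimize the free parameter, taking $t = 4\epsilon/D^2$, which minimizes the quadratic exponent and produces the claimed bound $\exp(-2\epsilon^2/D^2)$.

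There is no real obstacle here — the statement is essentially Hoeffding's inequality for a single bounded random variable — and the only thing to get right is the bookkeeping of constants: that the Lipschitz‑plus‑diameter hypothesis bounds the \emph{range} (not the variance) of $\phi(X)$ by $D$, and that Hoeffding's lemma contributes the factor $\tfrac{1}{8}$ in the exponent, which after optimization becomes the $2$ in $2\epsilon^2/D^2$. An alternative route that avoids invoking Hoeffding's lemma by name is to note that a mean‑zero variable in $[-D/2, D/2]$ is sub‑Gaussian with variance proxy $D^2/4$ and to quote the sub‑Gaussian tail bound directly; the same constant emerges. Since the lemma is quoted verbatim from~\citet{van2014probability}, I would also simply cite that source for the detailed constant‑chasing.
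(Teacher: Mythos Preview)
Your argument is correct: the $1$-Lipschitz condition together with the diameter bound forces $\phi(X)$ into an interval of length at most $D$, and from there the Chernoff method combined with Hoeffding's lemma (with the $\ell^2/8$ constant) yields exactly $\exp(-2\epsilon^2/D^2)$ after optimizing in $t$. The bookkeeping you flag is precisely the content of the proof, and there is no gap.

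As for comparison: the paper does not actually prove this lemma. It is stated in the appendix as a preliminary result and attributed directly to \citet{van2014probability}, with no argument supplied. So your proposal goes beyond what the paper does by writing out the Hoeffding-based derivation explicitly; this is the standard proof one finds in the cited reference, and your final remark that one could simply cite that source is in fact all the paper itself does.
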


\section{Proof of~\thref{thm1}}\label{app:proof3}

\begin{reptheorem}{thm1}
For a given instance $x \sim \cN(\mu, \Sigma)$, let $x'$ be the recourse that lies on the original data manifold (i.e. $x' \sim \cN(\mu, \Sigma)$) and is obtained without accounting for \mchanges. Let $\Sigma = UDU^T$. Then, for some \emph{true} model shift $\delta$, such that, $\frac{w^T\mu}{(w+\delta)^T\mu} \geq  \frac{\|\sqrt{D}Uw\|}{\|\sqrt{D}U(w+\delta)\|}$, and $\beta \geq 1$, the probability that $x'$ is invalidated on $f_{w+\delta}$ is at least:
$\frac{1}{2} \sqrt{\frac{2e}{\pi}} \frac{\sqrt{\beta-1}}{\beta} \exp^{-\beta \frac{(w^T\mu)^2}{2 \|\sqrt{D}Uw\|^2}}$
\end{reptheorem}

\begin{proof}
A $x'$ obtained without accounting for model shifts is valid for $f_w(\cdot)$ is invalidated on the robust classifier $\blackbox_{\delta}(\cdot)$ if, $x' \in \Omega$ where $$\Omega = \{x' \colon w^Tx' > 0 \, \cap  \, (w+\delta)^Tx' \leq 0\}$$

Integrating over the set $\Omega$ under the Gaussian pdf, we have:
\begin{align}
\begin{split}
P(x' \text{ is invalidated}) &=  \frac{1}{{(2\pi)}^{D/2} \sqrt{|\Sigma|}} \times \\ & \int_{\Omega} \exp{\bigg(-\frac{1}{2}(x'-\mu)^T \Sigma^{-1} (x'-\mu)\bigg)} dx
\end{split}
\end{align}
Transforming variables s.t. $v=x'-\mu$, and $\Omega_v = \{v \colon w^T(v+\mu) > 0 \, \cap  \, (w+\delta)^T(v+\mu) \leq 0\}$:
\begin{equation}
P(x' \text{ is invalidated}) =  \frac{1}{{(2\pi)}^{D/2} \sqrt{|\Sigma|}} \int_{\Omega_v} \exp{\bigg(-\frac{1}{2}v^T \Sigma^{-1} v\bigg)} dv
\end{equation}
Let $\Sigma =  UDU^T$, and $z=Uv$, the transformed hyperplanes are given by:
$w_1 = Uw$ i.e. $w = U^Tw_1$. Similarly, Let $U\delta = \delta_1$, then $U(w+\delta) = Uw + U\delta = w_1 + \delta_1$ i.e., $(w+\delta) = U^T(w_1 + \delta_1)$.
Then $\Omega_z = \{z \colon w_{1}^Tz + w^T\mu > 0 \, \cap  \, (w_1 + \delta_1)^Tz+(w+\delta)^T\mu \leq 0 \}$ and:
\begin{equation}
P(x' \text{ is invalidated}) =  \frac{1}{{(2\pi)}^{D/2} \sqrt{|\Sigma|}} \int_{\Omega_z} \exp{\bigg(-\frac{1}{2}z^T D^{-1} z\bigg)} dz
\end{equation}
Finally, transforming $z$ such that, $t = \sqrt{D}^{-1}z$ or $z = \sqrt{D}t$
$w_2 = \sqrt{D}w_1$ and $\delta_2 = \sqrt{D}\delta_1$, we have, $$\Omega_t = \{t \colon w_2^Tt+w^T\mu > 0 \, \cap  \, (w_2 + \delta_2)^Tt + (w+\delta)^T\mu \leq 0\}$$. Therefore:
\begin{equation}
P(x' \text{ is invalidated}) =  \frac{1}{{(2\pi)}^{d/2}} \int_{\Omega_t} \exp{\bigg(-\frac{1}{2}t^T t \bigg)} dt
\end{equation}

Finally, let $P$ be an orthogonal projection matrix s.t. $Pw_{2} = \|w_2\|e_d$ where $e_d$ is the basis vector for dimension $d$. By definition of the projection matrix in 1-d, we have $P = \frac{e_d e_d^T}{\|e_d\|^2} = e_de_d^T$. Thus the projection $P(w_2 + \delta_2) = \|w_2\|e_d + e_de_d^T\delta_2 = \|w_2 + \delta_2\|e_d$. Transforming $t$ s.t. $b = Pt$, we have $\Omega_b = \{b \colon b^T (\|w_2\|e_d) +w^T\mu > 0 \, \cap  \, b^T  (\|w_2 + \delta_2 \|e_d) + (w+\delta)^T\mu \leq 0\}$
\begin{equation}
P(x \text{ is invalidated}) =  \frac{1}{\sqrt{(2\pi)}} \int_{\Omega_b} \exp{\bigg(-\frac{1}{2}b^T b \bigg)} db
\end{equation}

Simplifying $\Omega_b$, we have that: $$\Omega_{b} = \{b \colon \mathbb{R}^{D-1} \times [c_1, c_2]\}$$ s.t. $c_1 = \frac{- w^T\mu}{\|w_2\|}$ and $c_2 =  \frac{-(w+\delta)^T\mu}{\|w_2 + \delta_2 \|}$ and:
\begin{equation}\label{eq:gef}
P(x \text{ is invalidated}) =  \frac{1}{\sqrt{(2\pi)}} \int_{c_1}^{c_2} \exp{\bigg(-\frac{1}{2}s^2\bigg)} ds
\end{equation}

We restrict to the case of $c_1 \leq c_2$ \ie:
\begin{equation}\label{eq:cond}
    \frac{w^T\mu}{(w+\delta)^T\mu} \geq \frac{\|w_2\|}{\|w_2 + \delta_2\|} = \frac{\|\sqrt{D}Uw\|}{\|\sqrt{D}U(w+\delta)\|}
\end{equation}

If $c_1 > c_2$, this implies that the true shift $\delta$ is such that non-robust recourse not invalid.
We bound Eq.~\ref{eq:gef} using the Gaussian Error Function defined as follows:

\begin{equation}
    \text{erf}(z) = \frac{2}{\sqrt{\pi}}\int_0^z \exp{-t^2} dt
\end{equation}
and the complementary error function as: $\text{erfc(z)} = 1 - \text{erfc(z)}$.

From~\citet{glaisher1871liv}, we have that:
\begin{equation}
   \bigl(\frac{c}{\pi}\bigr)^{1/2}\int_{p}^q \exp{(-cx^2)} dx = \frac{1}{2}(\text{erf}(q\sqrt{c}) - \text{erf}(p\sqrt{c}))
\end{equation}
 with $c=\frac{1}{2}$, we have,
 
 \begin{align}
 \begin{split}
     P(x' \text{ is invalidated}) &= \frac{1}{2} (\text{erf}(c_2/\sqrt{2}) - \text{erf}(c_1/\sqrt{2})) \\
     &= \frac{1}{2} (\text{erfc}(c_1/\sqrt{2}) - \text{erfc}(c_2/\sqrt{2}))
     \end{split}
 \end{align}
 
From~\citet{chang2011chernoff}, we note the following upper and lower bounds of the error function:
\begin{equation}
    \text{erfc}(c) \leq \exp^{-c^2/2}
\end{equation}

\begin{equation}
    \text{erfc}(c) \geq \sqrt{\frac{2e}{\pi}} \frac{\sqrt{\beta-1}}{\beta} \exp^{-\beta c^2/2}
\end{equation}
where $\beta \geq 1$.

 \begin{align}
 \begin{split}
     P(x' \text{ is invalidated}) &\geq \frac{1}{2} \text{erfc}(c_1/\sqrt{2}) \\
     &= \frac{1}{2} \sqrt{\frac{2e}{\pi}} \frac{\sqrt{\beta-1}}{\beta} \exp^{-\beta c_1^2/2} \\
     &=\frac{1}{2} \sqrt{\frac{2e}{\pi}} \frac{\sqrt{\beta-1}}{\beta} \exp^{-\beta \frac{(w^T\mu)^2}{2 \|\sqrt{D}Uw\|^2}}
     \end{split}
 \end{align}
 
 We notice that for any constant, $\, \exists \, \beta$ s.t. the RHS is maximized.
 \end{proof}
 
 \subsection{Discussion on other distributions}\label{app:remarks}
 
 Here we give illustrations of how recourses can be invalidated for other distributions like Bernoulli, Uniform, and Categorical. 
 \begin{remark}
Let $x \sim \text{Bernoulli}(p)$, where $0\leq p \leq 1$. To bound the probability that a recourse provided for samples from this distribution is invalidated due to \mchanges, we observe the following. Let the classifier for such samples be given by a threshold $\tau$ where $0<\tau<1$ that is, $y= \mathbf{1}(x>\tau)$. Then recourse is provided for all samples where $x=0$ and is given by $x'=1$. Now consider model shifts $\delta$ to $\tau$. Then, we have that for all $\delta$ such that $\tau + \delta < 1$, $p(x' \text{is invalidated}) = 0$. On the other hand, for all $\delta$ such that $\tau + \delta \geq 1$, $p(x' \text{is invalidated}) = 1$ and in fact no sample is favorably classified ($y=1$ is favorable).
 \end{remark}
 
  \begin{remark}
Let $x \sim \text{Unif}(a, b)$. The argument for Uniform distribution follows similarly. Let the classifier for such samples be given by a threshold $\tau$ where $a<\tau<b$ that is, $y= \mathbf{1}(x>\tau)$. Then recourse is provided for all samples where $x<tau$ and is given by $x'>\tau$. Now consider model shifts $\delta$ to $\tau$. Then, we have that:
  \[
    p(x' \text{ is invalidated }) = \left\{\begin{array}{@{}lr@{}}
        0 & \text{for } \delta < 0 \\
        \frac{\delta}{b-a}, & \text{for } \delta>0 \text{ and } \tau + \delta \leq  b \\
        1& \text{for } \delta>0 \text{ and } \tau+\delta > b
        \end{array}\right\} 
  \]
 \end{remark}
 
 \begin{remark}
 Let $k \sim \text{Categorical(\bp, K)}$ where $p \in \Delta^{K-1}$. For simplicity let $[K] = \{1, 2, \cdots, K\}$. We motivate this remark for a classifier $\bw$ where $y= \mathbf{1}(\bw^T\bx > \tau)$ where $\bx$ is the one-hot encoded version of $k$ i.e. $\bx_k = 1$ and is $\bx_{[K]\setminus k} = 0$. Since only one of the elements of $\bx$ is $1$ at any time, it is clear that $y=0$ for all $k$ such that $\textsc{k}_0 = \{k: k \in [K] \text{ and } w_k \leq \tau\}$ and $y=1$ for $\textsc{k}_1 = \{k: k \in [K] \text{ and } w_k >\tau\}$. Assume that flipping to any category from the current category is equally costly. Then for all samples s.t. $k \in \textsc{k}_0$, the recourse provided is any one category randomly chosen from $\textsc{k}_1$. As before let $\delta$ be the vector representing model shift to the original model parametrized by $\bw$. Then, with the same threshold $\tau$, under this model, $\textsc{k}_0^{\delta} = \{k: k \in [K] \text{ and } w_k+\delta_k \leq \tau\}$. Then the probability of a recourse being invalidated is: $\frac{|\textsc{K}_0^\delta \cap \textsc{K}_1|}{|\textsc{K}_1|}$
 \end{remark}
 Generalizations to multivariate families like exponential family distributions is left to future work.
 
 \section{Proof of~\thref{thm2}}\label{app:proof1}

\begin{reptheorem}{thm2}
We consider $x \in \cX$, and $x \sim \nu$ where $\nu$ is a distribution such that $\mathbb{E}_{\nu}[x] = \mu < \infty$, a metric space $(\cX, d(\cdot, \cdot))$ and $d: \cX \times \cX \rightarrow \mathbb{R}_{+}$. Let $d \triangleq \ell_2$ and assume that $(\cX, d)$ has bounded diameter $D = \sup_{x, x' \in \cX} d(x, x')$. Let recourses obtained without accounting for \mchanges and constrained to the manifold be denoted by $x' \sim \nu$, and robust recourses be denoted by $x''$. Let $\delta>0$ be the maximum shift under Eq.~\ref{eq:eq1} for sample $x$. For some $0 < \eta' \ll 1$, and $0<\alpha<1$,  w.h.p. $(1-\eta')$, we have that:
\begin{align}
\begin{split}
    c(x'',x) &\leq c(x',x) + \\
    &\frac{1}{\lambda \|w+\delta\|} \alpha (w + \delta)^T\mu + \sqrt{\frac{D^2}{2}\log{\big(\frac{1}{\eta'}\big)}}
\end{split}
\end{align}
\end{reptheorem}
\begin{proof}
Since $x''$ is the minimizer of Equation~\ref{eq:rr},
{\small{
\begin{alignat}{3}
     &\frac{1}{\|w+\delta\|}\llog(\blackbox_{\delta}(x''), 1) + \lambda c(x{''},x)\leq \\
     & && \frac{1}{{\|w+\delta\|}}{\llog(\blackbox_{\delta}(x'), 1) + \lambda c(x{'},x)} 
\end{alignat}   
}}
{\small{
\begin{alignat}{3}
     &c(x{''},x) - c(x{'},x) &&\leq \frac{\big(\llog(\blackbox_{\delta}(x'), 1) - \llog(\blackbox_{\delta}(x''), 1 ) \big)}{\lambda \|w+\delta\|} \\
     & &&= \frac{1}{\lambda \|w+\delta\|} \Bigl\{-\log{\sigma(w+\delta)^T x'} \\ 
     & && \quad +\log{\sigma(w+\delta)^T x''}\Bigr\} \\
     & &&=\frac{1}{\lambda \|w+\delta\|}\log{ \Bigl\{ \frac{1+\exp{-(w+\delta)^T x'}}{1+\exp{-(w+\delta)^T x''}} \Bigr\}} \\
     & &&= \frac{1}{\lambda \|w+\delta\|}\big( \log{\bigl\{1+\exp{-(w+\delta)^T x'} \bigr\}} \\
     & && \quad - \log{\bigl\{1+\exp{-(w+\delta)^T x''}\bigr\}}\big) \\
     & &&\leq \frac{1}{\lambda \|w+\delta\|} \log{\bigl\{1+\exp{-(w+\delta)^T x'} \bigr\}} \label{eq:lipschitz}
\end{alignat}
}}
from re-arranging, and where the last bound comes from the fact that $\log{\bigl\{1+\exp^{-(w+\delta)^T x'} \bigr\}} \geq 0$. 

From Lemma~\ref{lm21}, we know that $ \frac{1}{\|w+\delta\|}\log{\bigl\{1+\exp^{-(w+\delta)^T x'} \bigr\}}$ is 1-Lipschitz. Therefore we can apply the upper bound from Lemma~\ref{lm22}, to Eq.~\ref{eq:lipschitz}. This results in the following:
\begin{equation}
    \begin{aligned}
        c(x'', x) - c(x',x) \leq \frac{1}{\lambda} \frac{1}{\|w+\delta \|} \mathbb{E}_{\nu} [\log{\bigl\{1+\exp{-(w+\delta)^T x'}\bigr\}}] + \sqrt{\frac{D^2}{2}\log{\big(\frac{1}{\eta'}\big)}}\bigg)
    \end{aligned}
\end{equation}

We now bound the expectation $\mathbb{E}_{\nu} [\log{\bigl\{1+\exp{-(w+\delta)^T x'}\bigr\}}] \triangleq \mathbb{E}_{\nu}{\phi(w^Tx')}$. Analytical expressions and/or approximations for $\mathbb{E}_{\nu}{\phi(w^Tx')}$ i.e. the logistic function are generally not available in closed form. We provide bounds by noticing that $\exists $ constants $0<\alpha<1$ s.t. for $x<0$, $\mathbb{E}_{\nu} [\log{(1+\exp^{-z})}] \leq \alpha E_{\nu}[z]$ and $0<\beta \ll 1$ s.t., for $x\geq0$, $\mathbb{E}_{\nu} [\log{(1+\exp^{-z})}] \leq \beta E_{\nu}[z]$. Thus, we have:
\begin{equation}
    \begin{aligned}
        c(x'', x) - c(x',x) \leq \frac{1}{\lambda} \frac{1}{\|w+\delta \|} \mathbb{E}_{\nu} [(w+\delta)^T x'] + \sqrt{\frac{D^2}{2}\log{\big(\frac{1}{\eta'}\big)}}\bigg)
    \end{aligned}
\end{equation}
By definition, $\mathbb{E}_{\nu} [(w+\delta)^T x'] = (w+\delta)^T\mu$, substituting, we complete the proof.

\end{proof}

\section{Experiments}
\label{app:addexp}
\subsection{Experimental setup}

All experiments were run on a 2 GHz Quad-Core Intel Core i5. 

\paragraph{Real world datasets} 
Below we provide a complete list of all the features we used in each of our datasets. \\

The features we use for the German credit dataset are: "duration", "amount", "age", "personal\_status\_sex".
        
The features we use for the SBA Case dataset (temporal shift) are: 'Zip',
 'NAICS',
 'ApprovalDate',
 'ApprovalFY',
 'Term',
 'NoEmp',
 'NewExist',
 'CreateJob',
 'RetainedJob',
 'FranchiseCode',
 'UrbanRural',
 'RevLineCr',
 'ChgOffDate',
 'DisbursementDate',
 'DisbursementGross',
 'ChgOffPrinGr',
 'GrAppv',
 'SBA\_Appv',
 'New',
 'RealEstate',
 'Portion',
 'Recession',
 'daysterm',
 'xx'. 
 \par The features we use for the Student Performance dataset (geospatial shift) are: 'sex',
 'age',
 'address',
 'famsize',
 'Pstatus',
 'Medu',
 'Fedu',
 'Mjob',
 'Fjob',
 'reason',
 'guardian',
 'traveltime',
 'studytime',
 'failures',
 'schoolsup',
 'famsup',
 'paid',
 'activities',
 'nursery',
 'higher',
 'internet',
 'romantic',
 'famrel',
 'freetime',
 'goout',
 'Dalc',
 'Walc',
 'health',
 'absences'. 
\paragraph{Predictive models}
We use a 3-layer DNN with 50, 100, and 200 nodes in each consecutive layer. We use ReLU activation, binary cross entropy loss, adam optimizer, and 100 training epochs. 

\paragraph{Cost functions}
For PFC, we simulate pairwise feature preferences with 200 comparisons per feature pair with preferences assigned randomly. After passing these preferences to the Bradley-Terry model, we shift the output by its minimum so that all feature costs are non-negative. 
\paragraph{Setting and implementation details}
\paragraph{Real world experiments.}
Recall we partition our data into initial data , $D_1$, and shifted data, $D_2$. For the German credit dataset (correction shift) we use the original version as $D_1$ and the corrected version as $D_2$. For the SBA case dataset (temporal shift) we use the data from 1989-2006 as $D_1$ and all the data from 1986-2012 for $D_2$. For the student performance dataset (geospatial shift) we use the data from GP for $D_1$ and the data from both schools, GP and MS, for $D_2$.

For our experiments on real world data we use 5-fold cross validation. In each trial, we train $\blackbox_1$ on 4 folds of $D_1$ and $\blackbox_2$ on 4 folds of $D_2$. Average $\blackbox_1$ and $\blackbox_2$ accuracy and AUC on the remaining fold (hold out set) of $D_1$ and $D_2$ respectively, is reported in Table \ref{model_metrics}. 
We then find recourses on the remaining $D_1$ fold (hold out set). Results on average $\blackbox_1$ validity, $\blackbox_2$ validity, and recourse costs across the 5 trials are included in Table \ref{rwrecourse} in Section 5 of the main paper. 

\begin{table}[h!]
\centering
\begin{tabular}{|c|c|c|c|c|l}
\cline{1-5}
 &  &  & Accuracy & AUC &  \\ \cline{1-5}
\multirow{6}{*}{Correction} & \multirow{2}{*}{LR} & $\blackbox_1$ & 0.70 $\pm$ 0.01 & 0.65 $\pm$ 0.01 &  \\ \cline{3-5}
 &  & $\blackbox_2$ & 0.71 $\pm$ 0.02 & 0.65 $\pm$ 0.02 &  \\ \cline{2-5}
 & \multirow{2}{*}{NN} & $\blackbox_1$ & 0.71 $\pm$ 0.02 & 0.66 $\pm$ 0.02 &  \\ \cline{3-5}
 &  & $\blackbox_2$ & 0.69 $\pm$ 0.03 & 0.67 $\pm$ 0.03 &  \\ \cline{2-5}
 & \multicolumn{1}{l|}{\multirow{2}{*}{SVM}} & $\blackbox_1$ & \multicolumn{1}{l|}{0.71 $\pm$ 0.01} & \multicolumn{1}{l|}{0.65 $\pm$ 0.01} &  \\ \cline{3-5}
 & \multicolumn{1}{l|}{} & $\blackbox_2$ & \multicolumn{1}{l|}{0.71 $\pm$ 0.02} & \multicolumn{1}{l|}{0.65 $\pm$ 0.02} &  \\ \cline{1-5}
\multirow{6}{*}{Temporal} & \multirow{2}{*}{LR} & $\blackbox_1$ & 1.00 $\pm$ 0.00 & 1.00 $\pm$ 0.00 &  \\ \cline{3-5}
 &  & $\blackbox_2$ & 0.99 $\pm$ 0.00 & 1.00 $\pm$ 0.00 &  \\ \cline{2-5}
 & \multirow{2}{*}{NN} & $\blackbox_1$ & 1.00 $\pm$ 0.00 & 1.00 $\pm$ 0.00 &  \\ \cline{3-5}
 &  & $\blackbox_2$ & 0.99 $\pm$ 0.00 & 1.00 $\pm$ 0.00 &  \\ \cline{2-5}
 & \multicolumn{1}{l|}{\multirow{2}{*}{SVM}} & $\blackbox_1$ & 1.00 $\pm$ 0.00 & 1.00 $\pm$ 0.00 &  \\ \cline{3-5}
 & \multicolumn{1}{l|}{} & $\blackbox_2$ & 0.99 $\pm$ 0.00 & 1.00 $\pm$ 0.00 &  \\ \cline{1-5}
\multirow{6}{*}{Geospatial} & \multirow{2}{*}{LR} & $\blackbox_1$ & 0.92 $\pm$ 0.01 & 0.77 $\pm$ 0.06 &  \\ \cline{3-5}
 &  & $\blackbox_2$ & 0.85 $\pm$ 0.02 & 0.73 $\pm$ 0.02 &  \\ \cline{2-5}
 & \multirow{2}{*}{NN} & $\blackbox_1$ & 0.94 $\pm$ 0.01 & 0.72 $\pm$ 0.06 &  \\ \cline{3-5}
 &  & $\blackbox_2$ & 0.84 $\pm$ 0.03 & 0.70 $\pm$ 0.03 &  \\ \cline{2-5}
 & \multicolumn{1}{l|}{\multirow{2}{*}{SVM}} & $\blackbox_1$ & \multicolumn{1}{l|}{0.92 $\pm$ 0.01} & \multicolumn{1}{l|}{0.75 $\pm$ 0.07} &  \\ \cline{3-5}
 & \multicolumn{1}{l|}{} & \multicolumn{1}{l|}{$\blackbox_2$} & 0.85 $\pm$ 0.02 & \multicolumn{1}{l|}{0.73 $\pm$ 0.02} &  \\ \cline{1-5}
\end{tabular}
\vspace{0.1in}
\caption{Average test accuracy and AUC of LR, NN, and SVM models on real world datasets}
\label{model_metrics}
\end{table}
\paragraph{Synthetic experiments.}
Our synthetic experiment setting mirrors our real world experiment setting. We partition $D_1$ (original data) and $D_2$ (mean and/or variance shifted data) into 5 folds, training $\blackbox_1$ on 4 folds of $D_1$ and $\blackbox_2$ on 4 folds of $D_2$. We then find recourse on the remaining $D_1$ fold (hold out set). 

\paragraph{Nonlinear predictive models.}
For AR and \ROAR, we use LIME to find local linear approximations when the underlying predictive models are non-linear. We use the default implementation of LIME by \citet{ribeiro2016should} with logistic regression as the local linear model class. 

\subsection{Additional Experimental Results}
We include in this Appendix additional empirical results that were omitted from Section 5 of the main paper due to space constraints: 
\begin{itemize}
   \item We show the Avg. Cost, $\blackbox_1$ Validity, and $\blackbox_2$ Validity of recourses across different real world datasets with SVM predictive model (Table~\ref{svm_rw}). Results for other predictive models are included in Table~\ref{rwrecourse} in Section 5 of the main paper. 
   \item We show the impact of the degree of mean, variance, combination (both mean and variance) distrbution shifts on the validity of recourses for LR and SVM models under different cost functions (Figure~\ref{sim_lr}). Results with DNN model are included in Figure~\ref{simdata_res} in Section 5 of the main paper. 
   \item We compute the avg. cost (both $\ell_1$ distance and PFC) of recourses corresponding to LR, DNN, SVM models on synthetic datasets. Results are presented in Table~\ref{tab:sim_cost}.
   \item We fix $\lambda=0.1$ and evaluate the effect of $\delta_{max}$ on the validity and avg cost of \ROAR recourses. Results for correction, temporal, and geospatial shift are shown in Figures \ref{delta_corr}, \ref{delta_temp}, and \ref{delta_geo} respectively. 
   \item We fix $\lambda=0.1$ and evaluate the effect of $\delta_{max}$ on the validity and avg cost of \ROAR-MINT recourses. Results are reported in Figure \ref{delta_roarmint}. 
\end{itemize}




\begin{figure*}[htbp!]
\centering 
\begin{subfigure}{0.24\linewidth}
\centering
\includegraphics[width=\linewidth]{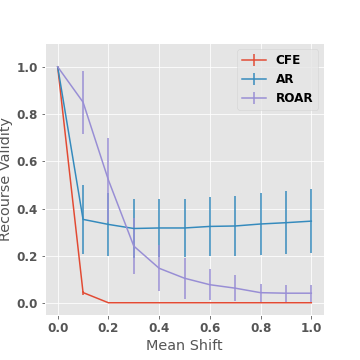}
\label{fig:sim_lrl1mean}
\end{subfigure}
\begin{subfigure}{0.24\linewidth}
\centering
\includegraphics[width=\linewidth]{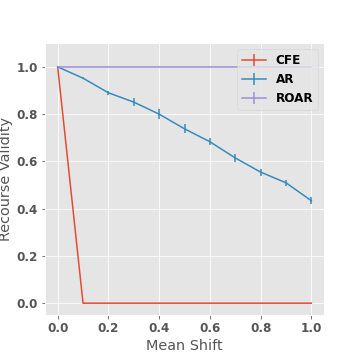}
\label{fig:sim_lrpfcmean}
\end{subfigure}
\begin{subfigure}{0.24\linewidth}
\centering
\includegraphics[width=\linewidth]{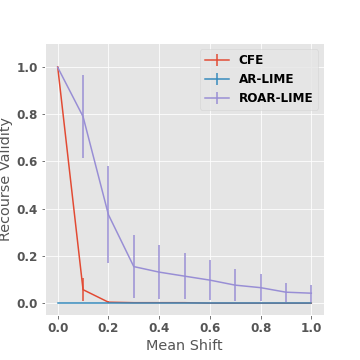}
\label{fig:sim_svml1mean}
\end{subfigure}
\begin{subfigure}{0.24\linewidth}
\centering
\includegraphics[width=\linewidth]{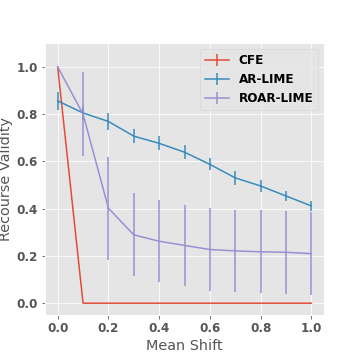}
\label{fig:sim_svmpfcmean}
\end{subfigure}
\begin{subfigure}{0.24\linewidth}
\centering
\includegraphics[width=\linewidth]{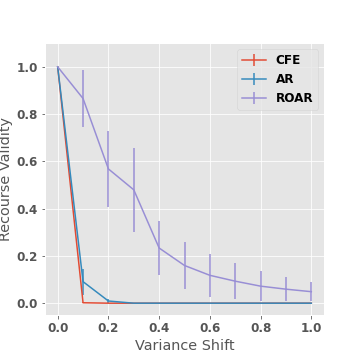}
\label{fig:sim_lrl1var}
\end{subfigure}
\begin{subfigure}{0.24\linewidth}
\centering
\includegraphics[width=\linewidth]{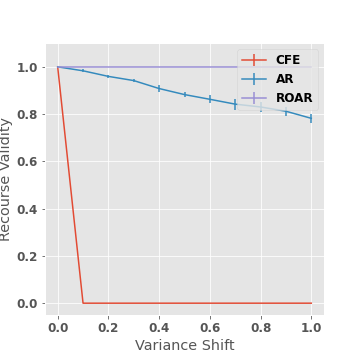}
\label{fig:sim_lrpfcvar}
\end{subfigure}
\begin{subfigure}{0.24\linewidth}
\centering
\includegraphics[width=\linewidth]{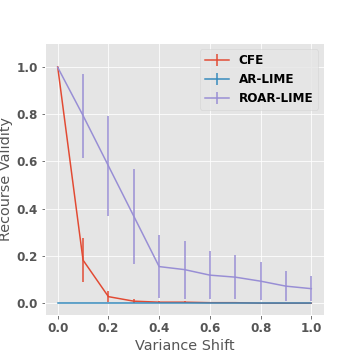}
\label{fig:sim_svml1var}
\end{subfigure}
\begin{subfigure}{0.24\linewidth}
\centering
\includegraphics[width=\linewidth]{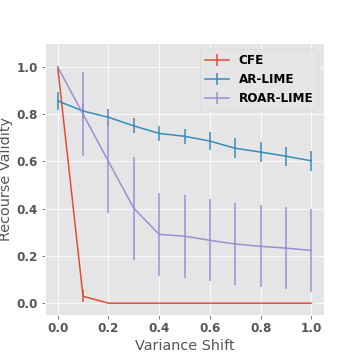}
\label{fig:sim_svmpfcvar}
\end{subfigure}
\begin{subfigure}{0.24\linewidth}
\centering
\includegraphics[width=\linewidth]{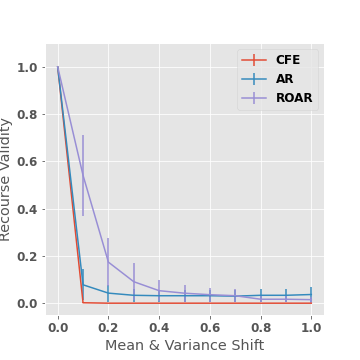}
\caption{LR, $\ell_1$ cost}
\label{fig:mvlrl1}
\end{subfigure}
\begin{subfigure}{0.24\linewidth}
\centering
\includegraphics[width=\linewidth]{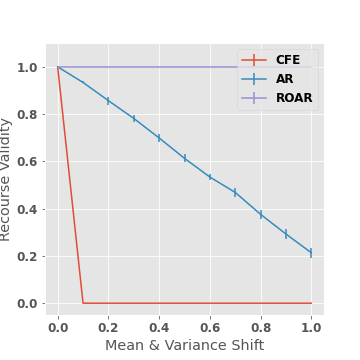}
\caption{LR, PFC cost}
\label{fig:mvlrpfc}
\end{subfigure}
\begin{subfigure}{0.24\linewidth}
\centering
\includegraphics[width=\linewidth]{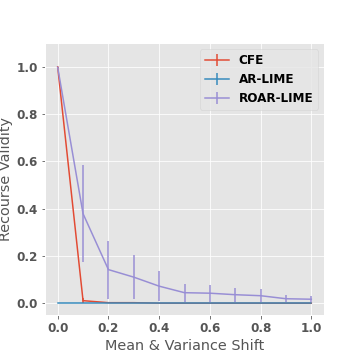}
\caption{SVM, $\ell_1$ cost}
\label{fig:mvsvml1}
\end{subfigure}
\begin{subfigure}{0.24\linewidth}
\centering
\includegraphics[width=\linewidth]{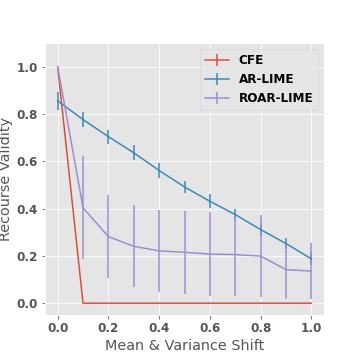}
\caption{SVM, PFC cost}
\label{fig:mvlrpfc}
\end{subfigure}
\caption{Impact of the degree of mean (top row), variance (middle row), and combination (bottom row) distribution shift on validity of recourse with LR and SVM models. Validity of the recourses generated by all methods drops as degree (magnitude) of the shift increases; In the majority of settings, \ROAR is the most robust, maintaining higher validity as shift increases compared to other baselines. Results with DNN model are shown in Figure~\ref{simdata_res} in Section 5 of the main paper.
}
\label{sim_lr}
\vspace{-0.1in}
\end{figure*}

\begin{table*}[htbp!]
\resizebox{\textwidth}{!}{
\begin{tabular}{|c|c|c|ccc|ccc|ccc|}
\hline
\multicolumn{3}{|c|}{} & \multicolumn{3}{c|}{Correction Shift} & \multicolumn{3}{c|}{Temporal Shift} & \multicolumn{3}{c|}{Geospatial Shift} \\ \hline
Model & Cost & Recourse & \multicolumn{1}{c|}{Avg Cost} & \multicolumn{1}{c|}{$\blackbox_1$ Validity} & $\blackbox_2$ Validity & \multicolumn{1}{c|}{Avg Cost} & \multicolumn{1}{c|}{$\blackbox_1$ Validity} & $\blackbox_2$ Validity & \multicolumn{1}{c|}{Avg Cost} & \multicolumn{1}{c|}{$\blackbox_1$ Validity} & $\blackbox_2$ Validity \\ \hline
\multirow{8}{*}{SVM} & \multirow{4}{*}{L1} & CFE & 0.83 $\pm$ 0.12 & 1.00 $\pm$ 0.00 & 0.54 $\pm$ 0.30 & 3.53 $\pm$ 0.45 & 1.00 $\pm$ 0.00 & 0.38 $\pm$ 0.12 & 5.96 $\pm$ 0.46 & 1.00 $\pm$ 0.00 & 0.11 $\pm$ 0.05 \\ \cline{3-3}
 &  & AR & 0.85 $\pm$ 0.12 & 1.00 $\pm$ 0.00 & 0.54 $\pm$ 0.30 & 1.85 $\pm$ 0.30 & 1.00 $\pm$ 0.00 & 0.57 $\pm$ 0.13 & 4.03 $\pm$ 0.17 & 0.06 $\pm$ 0.04 & 0.23 $\pm$ 0.09 \\ \cline{3-3}
 &  &  \ROAR & 3.57 $\pm$ 0.33 & 0.84 $\pm$ 0.08 & 0.87 $\pm$ 0.07 & 4.66 $\pm$ 0.71 & 0.99 $\pm$ 0.01 & \textbf{0.98 $\pm$ 0.01} & 15.12 $\pm$ 1.56 & 1.00 $\pm$ 0.00 & \textbf{0.92 $\pm$ 0.14} \\ \cline{3-3}
 &  & MINT & 4.90 $\pm$ 0.69 & 1.00 $\pm$ 0.00 & 0.90 $\pm$ 0.12 & NA & NA & NA & NA & NA & NA \\ \cline{3-3}
 \cline{3-3}
 &  & ROAR-MINT & 3.76 $\pm$ 0.14 & 1.00 $\pm$ 0.00 & \textbf{1.00 $\pm$ 0.00} & NA & NA & NA & NA & NA & NA \\\cline{2-12} 
 & \multirow{4}{*}{PFC} & CFE & 0.07 $\pm$ 0.04 & 1.00 $\pm$ 0.00 & 0.48 $\pm$ 0.28 & 0.23 $\pm$ 0.09 & 1.00 $\pm$ 0.00 & 0.36 $\pm$ 0.15 & 0.25 $\pm$ 0.06  & 1.00 $\pm$ 0.00 & 0.10 $\pm$ 0.06 \\ \cline{3-3}
 &  & AR & 0.09 $\pm$ 0.02 & 1.00 $\pm$ 0.00 & 0.65 $\pm$ 0.27 & 0.13 $\pm$ 0.05 & 1.00 $\pm$ 0.00 & 0.56 $\pm$ 0.26 & 0.20 $\pm$ 0.03 & 0.14 $\pm$ 0.09 & 0.11 $\pm$ 0.05 \\ \cline{3-3}
 &  & \ROAR & 0.86 $\pm$ 0.29 & 1.00 $\pm$ 0.00 & 1.00 $\pm$ 0.00 & 0.81 $\pm$ 0.27 & 0.99 $\pm$ 0.01 & \textbf{0.98 $\pm$ 0.01} & 1.31 $\pm$ 0.13 & 1.00 $\pm$ 0.00 & \textbf{0.98 $\pm$ 0.04} \\ \cline{3-3}
 &  & MINT & 0.43 $\pm$ 0.03 & 1.00 $\pm$ 0.00 & 0.90 $\pm$ 0.12 & NA & NA & NA & NA & NA & NA \\ \cline{3-3}
 &  & ROAR-MINT & 0.70 $\pm$ 0.03 & 1.00 $\pm$ 0.00 & \textbf{1.00 $\pm$ 0.00} & NA & NA & NA & NA & NA & NA \\ \hline
\end{tabular}}
\caption{Avg. Cost, $\blackbox_1$ Validity, and $\blackbox_2$ Validity of recourses across different real world datasets with SVM predictive model. Recourses using our framework (\ROAR and ROAR-MINT) are more robust (higher $\blackbox_2$ validity) compared to those generated by existing baselines. Results with other predictive models are shown in Table~\ref{rwrecourse} in Section 5 of the main paper.}
\label{svm_rw}
\end{table*}

\begin{table}[H]
\centering
\begin{tabular}{|c|c|c|c|c|}
\hline
 &  & \multicolumn{3}{c|}{Avg. Cost of Recourse} \\ \hline
Model & Cost & CFE & AR & \ROAR \\ \hline
\multirow{2}{*}{LR} & L1 & 3.93 $\pm$ 0.04 & 3.87 $\pm$ 0.04 & 4.03 $\pm$ 0.06 \\ \cline{2-5} 
 & PFC & 0.11 $\pm$ 0.00 & 0.00 $\pm$ 0.00 & 0.89 $\pm$ 0.01 \\ \hline
\multirow{2}{*}{DNN} & L1 & 3.85 $\pm$ 0.06 & 3.81 $\pm$ 0.05 & 3.69 $\pm$ 0.12 \\ \cline{2-5} 
 & PFC & 0.09 $\pm$ 0.01 & 0.00 $\pm$ 0.00 & 0.91 $\pm$ 0.01 \\ \hline
\multirow{2}{*}{SVM} & L1 & 3.92 $\pm$ 0.04 & 3.49 $\pm$ 0.10 & 4.01 $\pm$ 0.08 \\ \cline{2-5} 
 & PFC & 0.14 $\pm$ 0.01 & 0.01 $\pm$ 0.00 & 0.89 $\pm$ 0.01 \\ \hline
\end{tabular}
\caption{Avg. Cost of recourses on synthetic dataset. Notice that \ROAR costs are slightly higher than the baselines, confirming the Theorem 2 result. But on the flip side, \ROAR achieves much higher robustness than the baselines, as shown in Figure \ref{simdata_res} in Section 5 of the main paper and in Figure \ref{sim_lr} in the Appendix for the synthetic dataset.}
\label{tab:sim_cost}
\end{table}

\begin{figure*}[htbp!]
\centering 
\begin{subfigure}{0.24\linewidth}
\centering
\includegraphics[width=\linewidth]{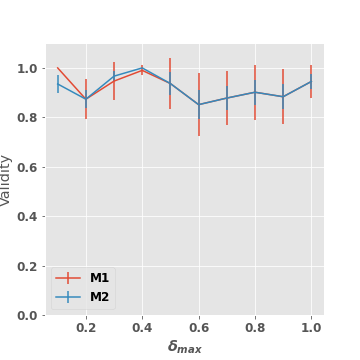}
\end{subfigure}
\centering 
\begin{subfigure}{0.24\linewidth}
\centering
\includegraphics[width=\linewidth]{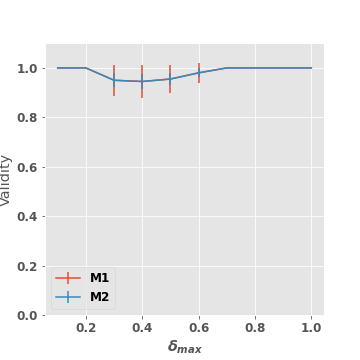}
\end{subfigure}
\begin{subfigure}{0.24\linewidth}
\centering
\includegraphics[width=\linewidth]{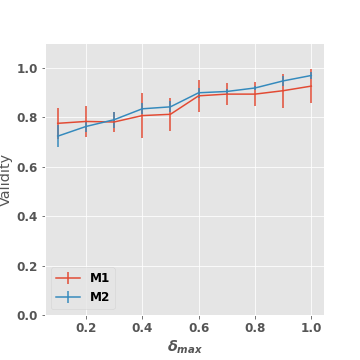}
\end{subfigure}
\begin{subfigure}{0.24\linewidth}
\centering
\includegraphics[width=\linewidth]{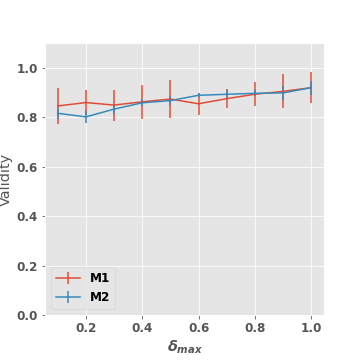}
\end{subfigure}
\begin{subfigure}{0.24\linewidth}
\centering
\includegraphics[width=\linewidth]{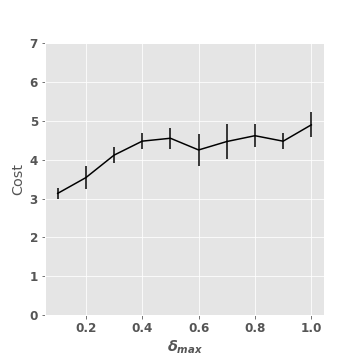}
\caption{LR, L1 cost}
\end{subfigure}
\begin{subfigure}{0.24\linewidth}
\centering
\includegraphics[width=\linewidth]{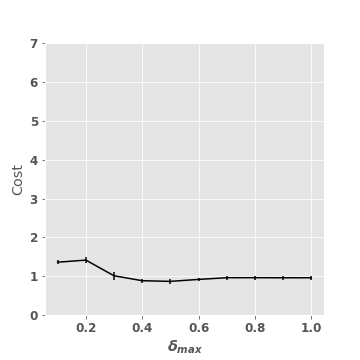}
\caption{LR, PFC cost}
\end{subfigure}
\begin{subfigure}{0.24\linewidth}
\centering
\includegraphics[width=\linewidth]{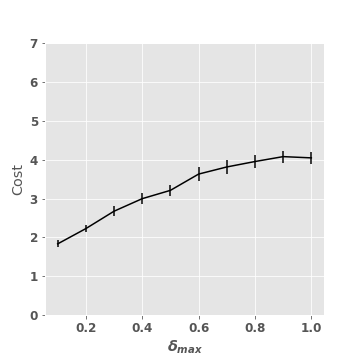}
\caption{DNN, L1 cost}
\end{subfigure}
\begin{subfigure}{0.24\linewidth}
\centering
\includegraphics[width=\linewidth]{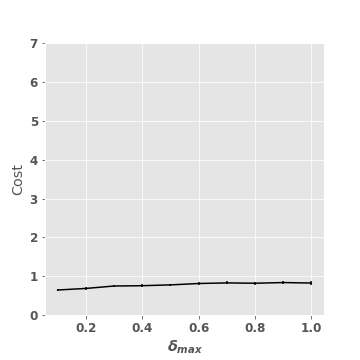}
\caption{DNN, PFC cost}
\end{subfigure}
\caption{\ROAR $\blackbox_1$ (original model) and $\blackbox_2$ (shifted model) Validity (top) and Avg Cost (bottom) for different values of $\delta_{max}$ on the German credit dataset (correction shift). Notice that as $\delta_{max}$ increases, \ROAR remains robust (high $\blackbox_{2}$ validity), but L1 cost increases.}
\label{delta_corr}
\end{figure*}

\begin{figure*}[htbp!]
\centering 
\begin{subfigure}{0.24\linewidth}
\centering
\includegraphics[width=\linewidth]{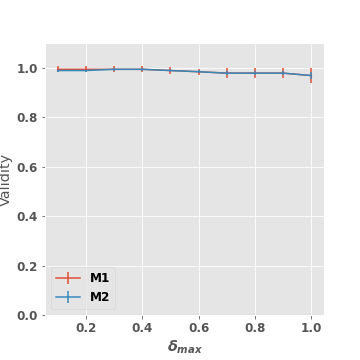}
\end{subfigure}
\centering 
\begin{subfigure}{0.24\linewidth}
\centering
\includegraphics[width=\linewidth]{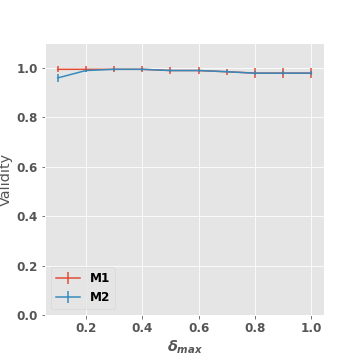}
\end{subfigure}
\begin{subfigure}{0.24\linewidth}
\centering
\includegraphics[width=\linewidth]{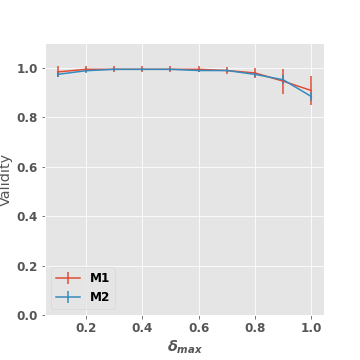}
\end{subfigure}
\begin{subfigure}{0.24\linewidth}
\centering
\includegraphics[width=\linewidth]{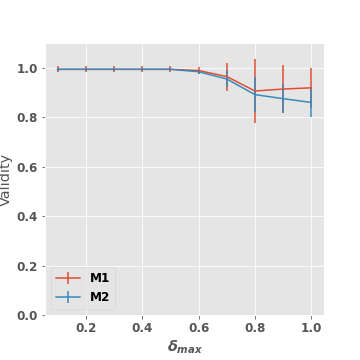}
\end{subfigure}
\begin{subfigure}{0.24\linewidth}
\centering
\includegraphics[width=\linewidth]{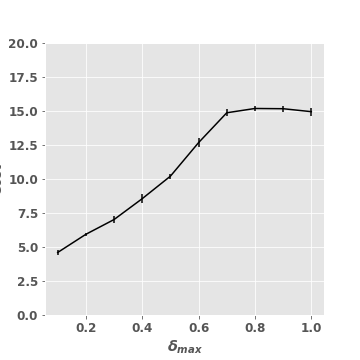}
\caption{LR, L1 cost}
\end{subfigure}
\begin{subfigure}{0.24\linewidth}
\centering
\includegraphics[width=\linewidth]{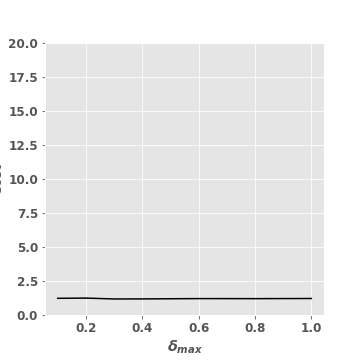}
\caption{LR, PFC cost}
\end{subfigure}
\begin{subfigure}{0.24\linewidth}
\centering
\includegraphics[width=\linewidth]{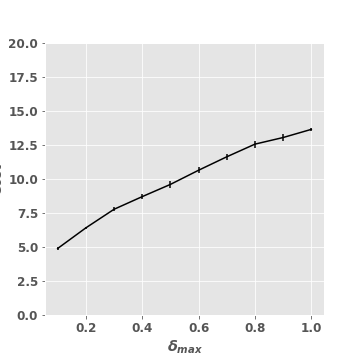}
\caption{DNN, L1 cost}
\end{subfigure}
\begin{subfigure}{0.24\linewidth}
\centering
\includegraphics[width=\linewidth]{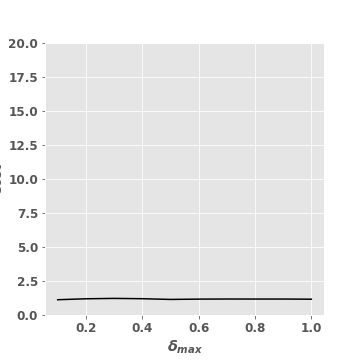}
\caption{DNN, PFC cost}
\end{subfigure}
\caption{\ROAR $\blackbox_1$ (original model) and $\blackbox_2$ (shifted model) Validity (top) and Avg Cost (bottom) for different values of $\delta_{max}$ on the SBA case dataset (temporal shift). Notice that as $\delta_{max}$ increases, \ROAR remains robust (high $\blackbox_{2}$ validity), but L1 cost increases.}
\label{delta_temp}
\end{figure*}

\begin{figure*}[h!]
\centering 
\begin{subfigure}{0.24\linewidth}
\centering
\includegraphics[width=\linewidth]{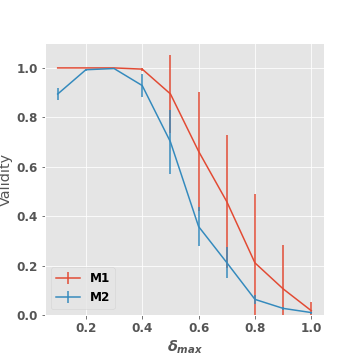}
\end{subfigure}
\centering 
\begin{subfigure}{0.24\linewidth}
\centering
\includegraphics[width=\linewidth]{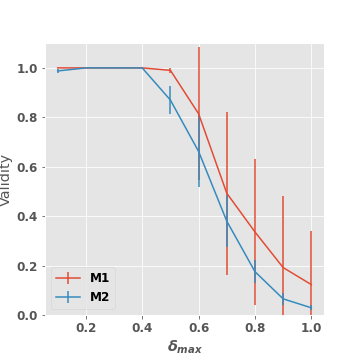}
\end{subfigure}
\begin{subfigure}{0.24\linewidth}
\centering
\includegraphics[width=\linewidth]{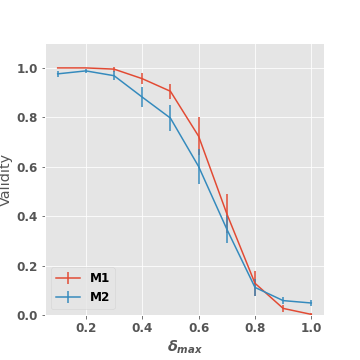}
\end{subfigure}
\begin{subfigure}{0.24\linewidth}
\centering
\includegraphics[width=\linewidth]{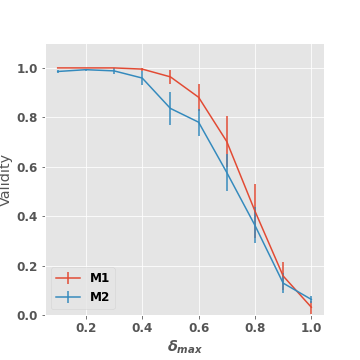}
\end{subfigure}
\begin{subfigure}{0.24\linewidth}
\centering
\includegraphics[width=\linewidth]{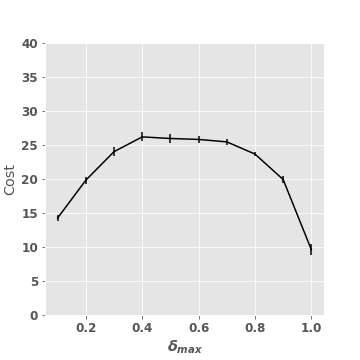}
\caption{LR, L1 cost}
\end{subfigure}
\begin{subfigure}{0.24\linewidth}
\centering
\includegraphics[width=\linewidth]{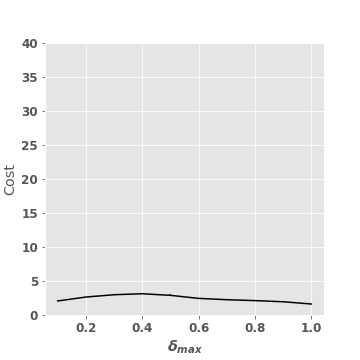}
\caption{LR, PFC cost}
\end{subfigure}
\begin{subfigure}{0.24\linewidth}
\centering
\includegraphics[width=\linewidth]{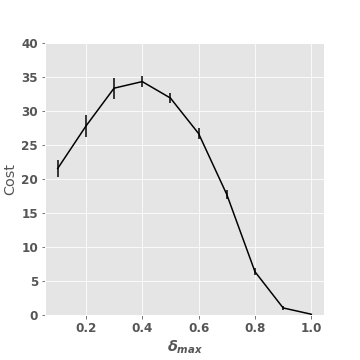}
\caption{DNN, L1 cost}
\end{subfigure}
\begin{subfigure}{0.24\linewidth}
\centering
\includegraphics[width=\linewidth]{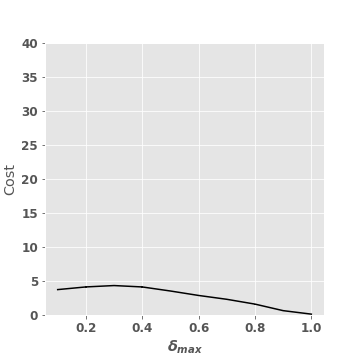}
\caption{DNN, PFC cost}
\end{subfigure}
\caption{\ROAR $\blackbox_1$ (original model) and $\blackbox_2$ (shifted model) Validity (top) and Avg Cost (bottom) for different values of $\delta_{max}$ on the student performance dataset (geospatial shift). Notice that validity decreases for $\delta_{max}>0.4$. This suggests that $\delta_{max}>0.4$ models a greater shift than the true $\blackbox_{1}$ to $\blackbox_{2}$ shift on this dataset.}
\label{delta_geo}
\end{figure*}

\begin{figure*}[h!]
\centering 
\begin{subfigure}{0.24\linewidth}
\centering
\includegraphics[width=\linewidth]{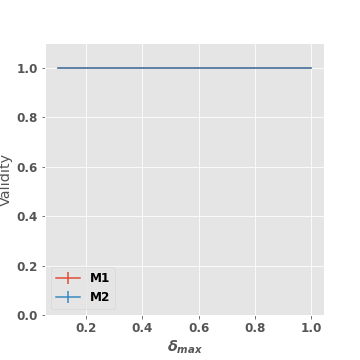}
\end{subfigure}
\centering 
\begin{subfigure}{0.24\linewidth}
\centering
\includegraphics[width=\linewidth]{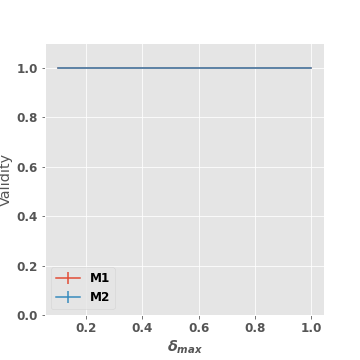}
\end{subfigure}

\begin{subfigure}{0.24\linewidth}
\centering
\includegraphics[width=\linewidth]{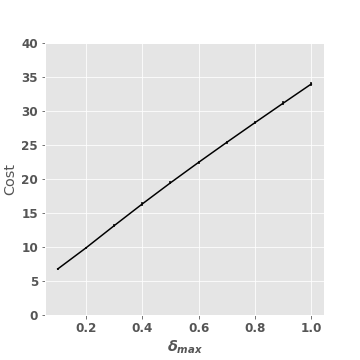}
\caption{LR, L1 cost}
\end{subfigure}
\begin{subfigure}{0.24\linewidth}
\centering
\includegraphics[width=\linewidth]{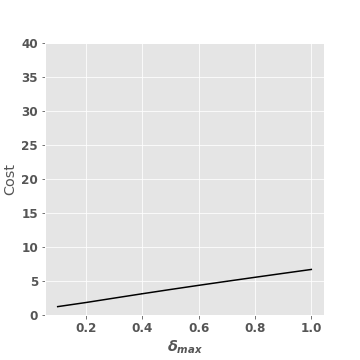}
\caption{LR, PFC cost}
\end{subfigure}
\caption{\ROAR-MINT $\blackbox_1$ (original model) and $\blackbox_2$ (shifted model) Validity (top) and Avg Cost (bottom) for different values of $\delta_{max}$ on the German credit dataset (correction shift). Notice that as $\delta_{max}$ increases, \ROAR-MINT remains robust (high $\blackbox_{2}$ validity), but cost increases.}
\label{delta_roarmint}
\end{figure*}



           
\end{document}